\newtheorem{theorem}{Theorem}
\theoremstyle{definition}
\newtheorem{definition}{Definition}
\DeclareMathOperator*{\argmax}{arg\,max}
\DeclareMathAlphabet{\mathcal}{OMS}{cmsy}{m}{n}
\title{Variational Combinatorial Sequential Monte Carlo Methods for Bayesian Phylogenetic Inference}
\author[1,*]{
{Antonio Khalil Moretti}{}} 
\author[1,*]{Liyi Zhang}
\author[1]{Christian A. Naesseth}
\author[1]{Hadiah Venner}
\author[1]{David Blei}
\author[1]{Itsik Pe'er}
\affil[ ]{%
    Columbia University 
}
\affil[ ]{%
$\{$amoretti, itsik$\}$@cs.columbia.edu\\
$\{$lz2574, christian.a.naesseth, hkv2001, david.blei$\}$@columbia.edu
}
\begin{document}
\maketitle

\begin{abstract}
Bayesian phylogenetic inference is often conducted via local or sequential search over topologies and branch lengths using algorithms such as random-walk Markov chain Monte Carlo (\textsc{Mcmc}) or Combinatorial Sequential Monte Carlo (\textsc{Csmc}). 
However, when \textsc{Mcmc} is used for evolutionary parameter learning, convergence requires long runs with inefficient exploration of the state space. We introduce Variational Combinatorial Sequential Monte Carlo (\textsc{Vcsmc}), 
a powerful framework that establishes 
variational sequential search to learn distributions over intricate combinatorial structures.
We then develop nested \textsc{Csmc}, an efficient proposal distribution for \textsc{Csmc} and prove that nested \textsc{Csmc} is an exact approximation to the (intractable) locally optimal proposal. We use nested \textsc{Csmc} to define a second objective, \textsc{Vncsmc} which yields tighter lower bounds than \textsc{Vcsmc}. 
We show that \textsc{Vcsmc} and \textsc{Vncsmc} are computationally efficient and 
explore higher probability spaces than existing methods on a range of tasks.
\end{abstract}

\section{Introduction}\label{sec:intro}
\let\thefootnote\relax\footnotetext{* = Authors contributed equally}
What is the origin of \textsc{Sars-CoV-II} and how can we analyze the progression of its genetic variants? How do antibodies evolve and develop in response to infection and vaccination? Bayesian phylogenetic inference is a powerful statistical tool to address these and other questions of central importance in molecular evolutionary biology and epidemiology~\citep{Dhar_2020,abf4c405af0c4229bee83cefc6b9501f}. Given an evolutionary model and an alignment of observed molecular sequences (\textsc{Dna}, \textsc{Rna}, \textsc{Protein}), Bayesian methods sample latent bifurcating trees to uncover genetic history, quantify uncertainty and incorporate prior information~\citep{10.1093/bioinformatics/17.8.754}.
Phylogenetic modeling involves three distinct challenges: ($i$) sampling from a discrete distribution to approximate an intractable summation over tree topologies, ($ii$) for each tree, integrating over the continuous branch lengths that govern the stochastic process for genetic mutations, and ($iii$) performing parameter optimization or model learning. The \textit{marginalization} of tree topologies and branch lengths is typically accomplished via local search algorithms such as random-walk Markov chain Monte Carlo (\textsc{Mcmc})~\citep{10.1093/bioinformatics/17.8.754} or sequential search algorithms such as Combinatorial Sequential Monte Carlo (\textsc{Csmc})~\citep{[pset]}. 
Sophisticated proposal methods based on Hamiltonian Monte Carlo or particle \textsc{Mcmc} have been suggested to simultaneously sample from composite spaces and optimize evolutionary parameters~\citep{pmlr-v70-dinh17a,csmc,wang2020particle}. However, these methods are often difficult to implement, slow to converge requiring days or weeks of CPU time, and heavily dependent upon heuristics.

Variational Inference (\textsc{Vi}) is a computationally efficient alternative to \textsc{Mcmc}. 
\textsc{Vi} posits an approximate posterior and then recovers parameters of both the model and approximate posterior by maximizing a lower bound to the log-marginal likelihood. One approach to learning variational distributions on phylogenetic trees is to parameterize the tree as a sequence of \textit{subsplits}, or ordered partitions on clades, and to recast the problem as a Bayesian network~\citep{NIPS2018_7418}. The drawback of this setup is that the support of the conditional probability tables scales exponentially with the number of taxa~\citep{zhang2018variational}. A body of recent work has established connections between \textsc{Vi} and sequential search by defining a variational family of distributions on hidden Markov models, where Sequential Monte Carlo (\textsc{Smc}) is used as the marginal likelihood estimator~\citep{maddison2017filtering,anh2018autoencoding,pmlr-v84-naesseth18a,lawson2018twisted,moretti2019smoothing,moretti2019particle,naesseth2020msc,MLCB,moretti2020psvo,Moretti2021}. We extend these approaches by developing variational sequential search methods that learn distributions over complex combinatorial structures.
Our contributions are as follows:
\begin{itemize}
    \item We develop Variational Combinatorial Sequential Monte Carlo (\textsc{Vcsmc}), a novel variational objective and structured approximate posterior defined on the space of phylogenetic trees. \textsc{Vcsmc} blends \textsc{Csmc} and \textsc{Vi}, providing the user with a flexible and powerful approximate inference algorithm.
    \item We further extend \textsc{Csmc} with nested \textsc{Smc} \citep{naesseth2015nested,naesseth2016high}, introducing a new efficient proposal distribution for \textsc{Csmc}.
We prove that this proposal is an \emph{exact approximation} to the (intractable) locally optimal proposal for \textsc{Csmc}. We use \textsc{Ncsmc} to define a second objective, \textsc{Vncsmc} which yields tighter lower bounds than \textsc{Vcsmc}.
\item In empirical studies, we demonstrate the advantage of \textsc{Vcsmc} and \textsc{Vncsmc}. First, we analyze 
a standard dataset of primate mitochondrial DNA, then the complete genomes of 17 Betacoronavirus species over 36,889 sites, and finally 7 benchmark datasets (DS1-DS7) ranging from 27 to 64 taxa. 
\textsc{Vcsmc} and \textsc{Vncsmc} are compared to existing benchmarks 
and shown to perform favorably across a range of tasks.
\end{itemize}


\paragraph{Related Work.}
Bayesian phylogenetics is often approximated using local search algorithms such as random-walk \textsc{Mcmc}~\citep{10.1093/bioinformatics/17.8.754} or sequential search algorithms such as \textsc{Csmc}~\citep{[pset]}. 
\textsc{Mcmc} methods can also be used for model learning, jointly estimating the phylogenetic trees and evolutionary parameters. Probabilistic path Hamiltonian Monte Carlo ($ppHMC$)~\citep{pmlr-v70-dinh17a} is one such method that extends Hamiltonian Monte Carlo by defining a Markov chain on the orthant complex of phylogenetic tree space. 
It is often the case that the likelihood term in the \textsc{Mcmc} acceptance ratio is difficult to evaluate. The idea of Particle \textsc{Mcmc} algorithms (\textsc{Pmcmc}) is to use $\textsc{Smc}$ as an unbiased estimate of the marginal likelihood 
to define a proposal for $\textsc{Mcmc}$~\citep{doi:10.1111/j.1467-9868.2009.00736.x}. A \textsc{Pmcmc} algorithm for evolutionary parameter learning was introduced in~\citep{csmc}, and improved upon using a particle Gibbs sampler in \citep{wang2020particle}. 
In contrast to these methods, the proposed approach leverages \textsc{Vi} for inference and introduces a new efficient proposal distribution for \textsc{Csmc}.

One approach to \textsc{Vi} for phylogenetic trees is to parameterize a tree as a sequence of \textit{subsplits}, or ordered partitions on clades and to recast the problem as a Bayesian network~\citep{NIPS2018_7418}. A drawback of this setup is that the support of the conditional probability tables scales exponentially with the number of taxa~\citep{zhang2018variational}. In subsequent work, the authors introduce two Variational Bayesian Phylogenetic Inference frameworks (\textsc{Vbpi} and \textsc{Vbpi-Nf}) by using pre-computed topologies to define the support of the conditional probability tables for the approximation~\citep{zhang2018variational,zhang2020improved}. In contrast, \textsc{Vcsmc} does not restrict the support of the tree topologies and instead leverages \textsc{Csmc} to compute a lower bound.

\section{Background}

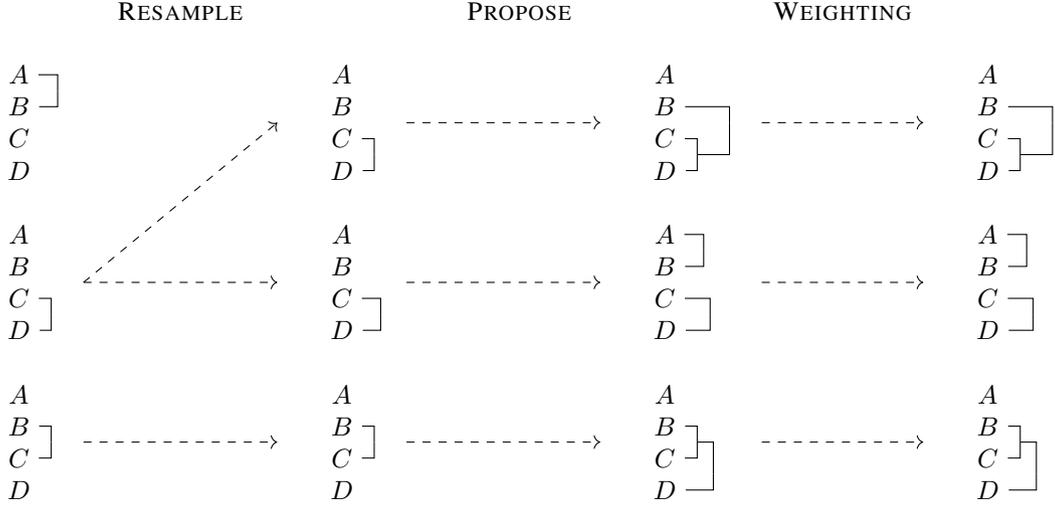
\begin{figure*}[ht!]
    \centering
    \begin{tikzpicture}[sloped, scale=0.85]
    \tikzset{
    dep/.style={circle,minimum size=1,fill=orange!20,draw=orange,
                general shadow={fill=gray!60,shadow xshift=1pt,shadow yshift=-1pt}},
    dep/.default=1cm,
    cli/.style={circle,minimum size=1,fill=white,draw,
                general shadow={fill=gray!60,shadow xshift=1pt,shadow yshift=-1pt}},
    cli/.default=1cm,
    obs/.style={circle,minimum size=1,fill=gray!20,draw,
                general shadow={fill=gray!60,shadow xshift=1pt,shadow yshift=-1pt}},
    obs/.default=1cm,
    spl/.style={cli=1,append after command={
                  node[circle,draw,dotted,
                       minimum size=1.5cm] at (\tikzlastnode.center) {}}},
    spl/.default=1cm,
    c1/.style={-stealth,very thick,red!80!black},
    v2/.style={-stealth,very thick,yellow!65!black},
    v4/.style={-stealth,very thick,purple!70!black}}
    
        \node(Resample) at (-4.5,6) {\textsc{Resample}};
        \node (A) at (-7,5) {$A$};
        \node (B) at (-7,4.5) {$B$};
        \node (C) at (-7,4) {$C$};
        \node (D) at (-7,3.5) {$D$};
        
        \node (ab) at ( -6.4,4.75) {};
        \draw (A) -| (ab.center);
        \draw (B) -| (ab.center);
        
        \coordinate (aone) at (-6,4.25); 
        
        \node (E) at (-7,2.5) {$A$};
        \node (F) at (-7,2) {$B$};
        \node (G) at (-7,1.5) {$C$};
        \node (H) at (-7,1) {$D$};
        \node (gh) at (-6.5,1.25) {};
        \draw (G) -| (gh.center) {};
        \draw (H) -| (gh.center) {};
        
        \coordinate (bone) at (-6, 1.75);
        
        \node (I) at (-7,0.) {$A$};
        \node (J) at (-7,-.5) {$B$};
        \node (K) at (-7,-1) {$C$};
        \node (L) at (-7,-1.5) {$D$};
        \node (jk) at (-6.5,-.75) {};
        \draw (J) -| (jk.center) {};
        \draw (K) -| (jk.center) {};
        
        \coordinate (cone) at (-6, -.75);
    
        \node(Propose) at (.75,6) {\textsc{Propose}};
        
        \node (AA) at (-2,5) {$A$};
        \node (BB) at (-2,4.5) {$B$};
        \node (CC) at (-2,4) {$C$};
        \node (DD) at (-2,3.5) {$D$};
        \node (ccdd) at (-1.5,3.75) {};
        \draw (CC) -| (ccdd.center);
        \draw (DD) -| (ccdd.center);
        
        \coordinate (atwo) at (-3, 4.25);
        \coordinate (athree) at (-1, 4.25);

        \node (EE) at (-2,2.5) {$A$};
        \node (FF) at (-2,2) {$B$};
        \node (GG) at (-2,1.5) {$C$};
        \node (HH) at (-2,1) {$D$};
        \node (gghh) at (-1.4,1.25) {};
        \draw (GG) -| (gghh.center);
        \draw (HH) -| (gghh.center);
        
        \coordinate (btwo) at (-3, 1.75);
        \coordinate (bthree) at (-1, 1.75);
        \draw[->,dashed] (bone) -- (btwo);
        \draw[->,dashed] (bone) -- (atwo);
        
        \node (II) at (-2,0) {$A$};
        \node (JJ) at (-2,-.5) {$B$};
        \node (KK) at (-2,-1) {$C$};
        \node (LL) at (-2,-1.5) {$D$};
        \node (kkll) at (-1.5, -.75) {};
        \draw (JJ) -| (kkll.center);
        \draw (KK) -| (kkll.center);
        
        \coordinate (ctwo) at (-3, -.75);
        \coordinate (cthree) at (-1, -.75);
        \draw[->,dashed] (cone) -- (ctwo);

        \node (U) at (3,5) {$A$};
        \node (V) at (3,4.5) {$B$};
        \node (W) at (3,4) {$C$};
        \node (X) at (3,3.5) {$D$};
        \node (wwxx) at (3.5, 3.75) {};
        \draw (W) -| (wwxx.center);
        \draw (X) -| (wwxx.center);
        \node (vwwxx) at (4, 4.25) {};
        \draw (wwxx.center) -| (vwwxx.center);
        \draw (V) -| (vwwxx.center);
        \coordinate (afour) at (2,4.25);
        \draw[->, dashed] (athree) -- (afour);
        
        \node (M) at (3,2.5) {$A$};
        \node (N) at (3,2) {$B$};
        \node (O) at (3,1.5) {$C$};
        \node (P) at (3,1) {$D$};
        \node (MN) at (3.6, 2.25) {};
        \draw (M) -| (MN.center);
        \draw (N) -| (MN.center);
        \node (OP) at (3.7, 1.25) {};
        \draw (O) -| (OP.center);
        \draw (P) -| (OP.center);
        \coordinate (bfour) at (2,1.75);
        \draw[->, dashed] (bthree) -- (bfour);

        \node (Q) at (3,0.) {$A$};
        \node (R) at (3,-.5) {$B$};
        \node (S) at (3,-1.) {$C$};
        \node (T) at (3,-1.5) {$D$};
        \node (RS) at (3.5, -.75) {};
        \draw (R) -| (RS.center);
        \draw (S) -| (RS.center);
        \node (RST) at (3.75,-1.25) {};
        \draw (RS.center) -| (RST.center);
        \draw (T) -| (RST.center);
        
        \coordinate (cfour) at (2,-.75);
        \draw[->, dashed] (cthree) -- (cfour);

        \node(Weighting) at (5.75,6) {\textsc{Weighting}};
        \node (Uu) at (8,5) {$A$};
        \node (Vv) at (8,4.5) {$B$};
        \node (Ww) at (8,4) {$C$};
        \node (Xx) at (8,3.5) {$D$};
        \node (wwxx) at (8.5, 3.75) {};
        \draw (Ww) -| (wwxx.center);
        \draw (Xx) -| (wwxx.center);
        \node (vwwxx) at (9., 4.25) {};
        \draw (Vv) -| (vwwxx.center);
        \draw (wwxx.center) -| (vwwxx.center);
        
        \coordinate (afive) at (4.5,4.25);
        \coordinate (asix) at (7,4.25);
         \draw[->, dashed] (afive) -- (asix);
        
        \node (Mm) at (8,2.5) {$A$};
        \node (Nn) at (8,2) {$B$};
        \node (Oo) at (8,1.5) {$C$};
        \node (Pp) at (8,1) {$D$};
        \node (MN) at (8.6, 2.25) {};
        \draw (Mm) -| (MN.center);
        \draw (Nn) -| (MN.center);
        \node (OP) at (8.7, 1.25) {};
        \draw (Oo) -| (OP.center);
        \draw (Pp) -| (OP.center);
        
        \coordinate (bfive) at (4.5,1.75);
        \coordinate (bsix) at (7,1.75);
        \draw[->, dashed] (bfive) -- (bsix);
        
        \node (Qq) at (8,0.) {$A$};
        \node (Rr) at (8,-.5) {$B$};
        \node (Ss) at (8,-1.) {$C$};
        \node (Tt) at (8,-1.5) {$D$};
        \node (RS) at (8.5, -.75) {};
        \draw (Rr) -| (RS.center);
        \draw (Ss) -| (RS.center);
        \node (RST) at (8.75, -1.25) {};
        \draw (RS.center) -| (RST.center);
        \draw (Tt) -| (RST.center);
        
        \coordinate (cfive) at (4.5,-.75);
        \coordinate (csix) at (7,-.75);
        \draw[->, dashed] (cfive) -- (csix);

    \end{tikzpicture}
    \caption{Overview of the \textsc{Csmc} framework. $K$ partial states are maintained as forests over the set of taxa. Each iteration of Algorithm \ref{alg:csmc} involves three steps: (1) resample partial states according to their importance weights, (2) propose an extension of each partial state to a new partial state by connecting two trees in the forest, and (3) compute weights for each new partial state by using Felsenstein's pruning algorithm. In the above, three samples are shown over four taxa $A,B,C,D.$}
    \label{fig:trellis}
\end{figure*}

\paragraph{Phylogenetic Trees.}
We wish to infer a latent bifurcating tree that describes the evolutionary relationships among a set of observed molecular sequences. A phylogeny is defined by a tree topology $\tau$ and a set of branch lengths $\mathcal{B}$.
A \textit{tree topology} is defined as a connected acyclic graph $(V,E)$ where $V$ is a set of vertices and $E$ is a set of edges. \textit{Leaf nodes} denote vertices of degree 1 and correspond to observed taxa. \textit{Internal nodes} designate vertices of degree 3 (one parent and two children) and represent unobserved taxa (e.g. DNA bases of ancestral species). The \textit{root node} is of degree 2 (two children) and represents the common evolutionary ancestor of all taxa. 

For each edge $e \in E$, we associate a \textit{branch length}, denoted $b(e) \in \mathbb{R}_{>0}$, and $\mathcal{B} = \{b(e)\}_{e \in E}$. The branch length captures the intensity of the evolutionary changes between two vertices. 
An \textit{ultrametric tree} is one with constant evolutionary rate along all paths from $v$ to its descendants. 
\textit{Nonclock trees} are general trees that do not require ultrametric assumptions. In this work we focus on phylogenetic inference methods for nonclock trees as these are most pertinent to biologists.

\paragraph{Bayesian Phylogenetic Inference.}
Let the matrix $\mathbf{Y} = \{Y_1,\cdots,Y_S \} \in \Omega^{NxS}$ denote the observed molecular sequences with characters in $\Omega$ of length $S$ over $N$ species. Bayesian inference requires specifying the prior density and likelihood function over tree topology $\tau$, branch length set $\mathcal{B}$ and generative model parameters $\theta$ to write the joint posterior,
\begin{equation}
    P_{\theta}(\mathcal{B},\tau|\mathbf{Y}) = \frac{P_{\theta}(\mathbf{Y}|\tau,\mathcal{B})P_{\theta}(\tau,\mathcal{B})}{P_{\theta}(\mathbf{Y})}.
    \label{eq:posterior}
\end{equation}
The prior is uniform over topologies and a product of independent exponential distributions over branch lengths with rate $\lambda_{bl}$. The evolution of each site is modeled independently using a continuous time Markov chain with rate matrix $\mathbf{Q}$.
Let $\zeta_{v,s}$ denote the state of genome for species $v$ at site $s$ and define the evolutionary model along branch $b(v\rightarrow v')$:
\begin{equation} 
P_{\theta}(\zeta_{v',s} = j| \zeta_{v,s} = i) = \exp\left(b(e)\mathbf{Q}_{i,j}\right).
\end{equation}
The likelihood of a given phylogeny $P_{\theta}(\mathbf{Y}|\tau,\mathcal{B}) = \prod\limits_{i=1}^{S}P_{\theta}(Y_i|\tau,\mathcal{B})$ can be evaluated in linear time using the sum-product or Felsenstein's pruning algorithm~\citep{Felsenstein:1981:J-Mol-Evol:7288891} via the formula:
\begin{align*}
    P_{\theta}(&\mathbf{Y}|\tau,\mathcal{B}) 
    \coloneqq \prod\limits_{i=1}^{S}\sum\limits_{a^i}^{}\eta(a^i_{\rho})\prod\limits_{(u,v)\in E(\tau)}^{}\text{exp}\left(-b_{u,v}\mathbf{Q}_{a_u^i,a_v^i} \right),
\end{align*}
where $\rho$ is the root node, $a^i_u$ is the assigned character of node $u$, $E(\tau)$ represents the set of edges in $\tau$ and $\eta$ is the prior or stationary distribution of the Markov chain. The normalization constant $P_{\theta}(\mathbf{Y})$ requires marginalizing the $(2N-3)!!$ distinct topologies which is intractable~\citep{semple2003phylogenetics}.

\paragraph{Computational Challenges.} We distinguish the two computational tasks required for phylogenetic inference. First, inference involves computing the normalization constant $P_{\theta}(\mathbf{Y})$ by marginalizing the $(2N-3)!!$ distinct topologies:
    \begin{equation}
        P_\theta(\mathbf{Y}) = \sum\limits_{\tau \in \mathcal{T}}^{}\int p_\theta(\mathbf{Y}|\tau,\mathcal{B})p_\theta(\tau,\mathcal{B}) d\mathcal{B} \,.
        \label{eq:likelihood}
    \end{equation}
A common approach used for approximating Eq. \ref{eq:likelihood} is to sample tree topologies $\tau$ and branch lengths $\mathcal{B}$ via Monte Carlo methods, such as \textsc{Csmc}, given that $\theta$ is known.

Second, learning (\textit{optimization}) refers to finding the set of parameters $\theta = (Q,\{\lambda_i\}_{i=1}^{|E|} \in \Theta)$ that maximize the data log-likelihood obtained by marginalizing Eq. \ref{eq:likelihood}:
    \begin{equation}
        \theta^\star =  \argmax_{Q,\{\lambda_i\}_{i=1}^{|E|}} ~{\log P_\theta(\mathbf{Y})} \,. 
    \end{equation}
Sampling algorithms can also be used by assigning a prior to $\theta$, then performing a local search for the parameters via \textsc{Mcmc} methods, given that the data likelihood is available.

\paragraph{Variational Inference.}
\textsc{Vi} is a technique for approximating the posterior $P_\theta(\mathcal{B},\tau | \mathbf{Y})$ 
when marginalization of latent variables is not analytically feasible. By introducing a tractable distribution $Q_\phi(\mathcal{B},\tau|\mathbf{Y})$ it is possible to form a lower bound to the log-likelihood:
\begin{equation}
\log P_\theta(\mathbf{Y}) \geq \mathcal{L}_{\text{ELBO}}(\theta,\phi,\mathbf{Y}) \coloneqq \underset{Q}{\mathbb{E}}\Bigg[\log \frac{P_\theta(\mathbf{Y}, \mathcal{B},\tau)} {Q_\phi(\mathcal{B},\tau|\mathbf{Y})}\Bigg] \,. \label{ELBO}
\end{equation}
Auto Encoding Variational Bayes~\citep{kingma2013autoencoding} (\textsc{Aevb}) simultaneously 
trains
$Q_\phi(\mathcal{B},\tau|\mathbf{Y})$ and $P_\theta(\mathbf{Y}, \mathcal{B},\tau)$. 
The expectation in Eq. \ref{ELBO} is approximated by averaging Monte Carlo samples from $Q_\phi(\mathcal{B},\tau|\mathbf{Y})$ which are reparameterized by evaluating a deterministic function of a $\phi$-independent random variable. When the ratio $P_\theta(\mathbf{Y}, \mathcal{B},\tau)/Q_\phi(\mathcal{B},\tau|\mathbf{Y})$ is concentrated around its mean, Jensen's inequality produces a tighter bound.

Deriving a tractable approximation $Q_\phi(\mathcal{B},\tau|\mathbf{Y})$ for the phylogenetic tree model can be challenging so we turn to \textsc{Csmc}.

\paragraph{Combinatorial Sequential Monte Carlo.}
 \textsc{Csmc} is designed for inference in phylogenetic tree models. \textsc{Csmc} approximates a sequence of target distributions $\bar{\pi}_r$ on increasing probability spaces such that the final target coincides with Eq. \ref{eq:posterior}~\citep{csmc}. The (unnormalized) target distribution  $\pi$ and its normalization constant $\|\pi\|$ corresponding to the numerator and denominator in Eq. \ref{eq:posterior}  are approximated by sequential importance resampling in $R$ steps. Unlike standard \textsc{Smc} methods, the target $\pi$ is defined on a combinatorial set (the space of tree topologies) and the continuous branch lengths. This requires defining an intermediate object referred to as a \textit{partial state}.%
  \begin{definition}[Partial State]
 A partial state of rank r denoted $s_r=\{(t_i,X_i)\}$ is a collection of rooted trees that satisfies the following three conditions: (i) the set of partial states of different ranks are disjoint, $\forall\, r \neq s$, $\mathcal{S}_r \cap \mathcal{S}_s = \emptyset$ ; (ii) the set of partial states of smallest rank has a single element $S_0 = \{\bot \}$; and (ii) the set of partial states at the final rank $R$ corresponds to the target space $\mathcal{X}$.
 \end{definition}
 \textsc{Csmc} operates by sampling $K$ \textit{partial states} (or \textit{particles}) $\{s_{r}^k\}_{k=1}^{K} \in \mathcal{S}_r$ at each rank $r$ which are used to form a distribution,
\begin{equation}
    \widehat{\pi}_{r} = \|\widehat{\pi}_{r-1}\|\frac{1}{K}\sum\limits_{k=1}^{K}w_{r}^k\delta_{s_r^k}(s) \qquad \forall s \in \mathcal{S},
\end{equation}
 where $\delta_s$ is the Dirac measure and $w_{r}^k$ are the importance weights. 
 Resampling ensures that particles remain in areas of high probability mass. Each resampled state $s_{r-1}^{a_{r-1}^k}$, where $a_{r-1}^k \in \{1,\ldots,K\}$ is the resampled index, of rank $r-1$ is then extended to a state of rank $r$, $s_{r}^k$, by simulating from a proposal distribution $q(\cdot |s_{r-1}^{a_{r-1}^k}): \mathcal{S} \rightarrow [0,1]$. The importance weights are computed as follows:
 \begin{equation}
 w_{r}^k = w(s_{r-1}^{a_{r-1}^k},s_{r}^k) = \frac{\pi(s_{r}^k)}{\pi(s_{r-1}^{a_{r-1}^k})}\cdot \frac{\nu^{-}(s_{r-1}^{a_{r-1}^k})}{q(s_{r}^k |s_{r-1}^{a_{r-1}^k})},
 \end{equation}
 where $\nu^{-}$ is a probability density over $\mathcal{S}$ correcting an over-counting problem~\citep{csmc}. An overview of the procedure is given in Fig. \ref{fig:trellis}. 
 An unbiased estimate for the marginal likelihood can be constructed from the weights which converges in $L^2$ norm,
\begin{equation}
    \widehat{\mathcal{Z}}_{CSMC} \coloneqq \|\widehat{\pi}_{R}\| = \prod\limits_{r=1}^{R}\left(\frac{1}{K} \sum\limits_{k=1}^{K}w_{r}^k\right) \rightarrow \|\pi \|.
    \label{eq:smcmarginallikelihood}
\end{equation}

\textsc{Vcsmc} melds \textsc{Vi} and \textsc{Csmc} to approximate the posterior as well as the model parameters.

\section{Variational Combinatorial Sequential Monte Carlo}
\label{vcsmc}
\paragraph{Variational Objective.} 
The idea of \textsc{Vcsmc} is to simultaneously learn the model parameters and proposal parameters by maximizing a lower bound to the data marginal log-likelihood, using \textsc{Csmc} 
as an unbiased estimator of the marginal likelihood. 

We begin by defining a structured approximate posterior which factorizes over rank events. Each state (or rank event) $s_r$ is specified by a topology, a forest of trees, and their corresponding set of branch lengths. The proposal $q_{\phi,\psi}(s_{r}^k|s_{r-1}^{a_{r-1}^k})$ is the probability of state $s_{r}^k$ given the resampled state at the previous rank $s_{r-1}^{a_{r-1}^k}$. Subscripts  $\phi$ and $\psi$ denote discrete and continuous proposal parameters respectively. The approximate posterior  is (written explicitly in Eq.~\ref{eq:fullposterior}):
\begin{align}
     &Q_{\phi,\psi}\left(s_{1:R}^{1:K}, a_{1:R-1}^{1:K}\right)
    \coloneqq   \\ 
    &
    \prod\limits_{k=1}^{K}q_{\phi,\psi}(s_{1}^k)\times 
    \prod\limits_{r=2}^{R}\prod\limits_{k=1}^{K} \left[
    \frac{w_{r-1}^{a_{r-1}^k}}{\sum_{l=1}^K w_{r-1}^l} \cdot
    q_{\phi,\psi}\left(s_{r}^k|s_{r-1}^{a_{r-1}^k}\right)
     \right]
    \, . \nonumber
\end{align}

At the final rank event $R=N-1$, an unbiased approximation to the likelihood is formed by averaging over importance weights, which, in turn represent the sample phylogenies that are constructed iteratively. A multi-sample variational objective is formed via the lower bound:
\begin{align}
     \mathcal{L}_{CSMC} 
     \coloneqq \underset{Q}{\mathbb{E}}\left[\log \widehat{ \mathcal{Z}}_{CSMC} \right]\,.
\end{align}
The presence of the discrete distribution over partial states presents a challenge for variational reparameterization. Unlike standard variational SMC methods \citep{pmlr-v84-naesseth18a}, states are formed by sampling from a large combinatorial set. We take two approaches, the first is to drop discrete terms from the gradient estimates. The second is to reparameterize these terms as Gumbel-Softmax random variables forming a differentiable approximation through a convex relaxation over the simplex. Continuous proposal terms are drawn by evaluating a deterministic function of a $\psi$-independent random variable.

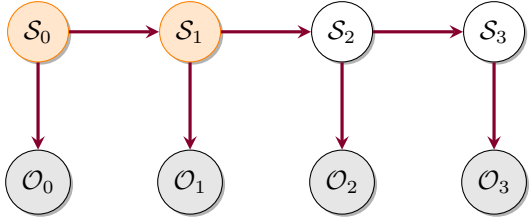
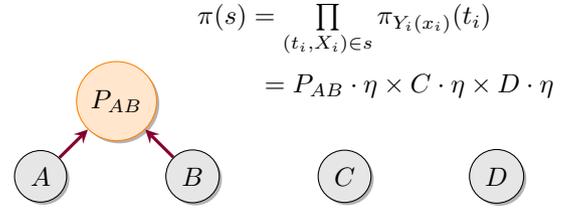
\begin{figure*}
\subfigure[State space model representation of \textsc{Csmc}. Latent variables consist of internal nodes and branch lengths with Markovian dependencies whereas observations are the recorded molecular sequences.]{
    \begin{tikzpicture}
    \tikzset{
    dep/.style={circle,minimum size=1,fill=orange!20,draw=orange,
                general shadow={fill=gray!60,shadow xshift=1pt,shadow yshift=-1pt}},
    dep/.default=1cm,
    cli/.style={circle,minimum size=1,fill=white,draw,
                general shadow={fill=gray!60,shadow xshift=1pt,shadow yshift=-1pt}},
    cli/.default=1cm,
    obs/.style={circle,minimum size=1,fill=gray!20,draw,
                general shadow={fill=gray!60,shadow xshift=1pt,shadow yshift=-1pt}},
    obs/.default=1cm,
    spl/.style={cli1,append after command={
                  node[circle,draw,dotted,
                       minimum size=1.5cm] at (\tikzlastnode.center) {}}},
    spl/.default=1cm,
    c1/.style={-stealth,very thick,red!80!black},
    v2/.style={-stealth,very thick,yellow!65!black},
    v4/.style={-stealth,very thick,purple!70!black}}
\begin{scope}[local bounding box=graph]
    \node[dep] (1) at (2,2) {$\mathcal{S}_0$};
    \node[dep] (2) at (4,2) {$\mathcal{S}_1$};
    \node[cli] (3) at (6,2) {$\mathcal{S}_2$};
    \node[cli] (4) at (8,2) {$\mathcal{S}_3$};
    \node[obs] (1b) at (2,0) {$\mathcal{O}_0$};
    \node[obs] (2b) at (4,0) {$\mathcal{O}_1$};
    \node[obs] (3b) at (6,0) {$\mathcal{O}_2$};
    \node[obs] (4b) at (8,0) {$\mathcal{O}_3$};
    \draw[v4] (1) -- (2);
    \draw[v4] (2) -- (3);
    \draw[v4] (3) -- (4);
    \draw[v4] (1) -- (1b);
    \draw[v4] (2) -- (2b);
    \draw[v4] (3) -- (3b);
    \draw[v4] (4) -- (4b);
\end{scope}
\end{tikzpicture}
}
\hfill
\subfigure[Overview of the natural forest extension of the target measure. Partial state $s_{r}^k=\{P_{AB},C,D\}$ is defined as a forest over leaves $\{A,B,C,D\}$. ]{
    \begin{tikzpicture}
    \tikzset{
    dep/.style={circle,minimum size=1,fill=orange!20,draw=orange,
                general shadow={fill=gray!60,shadow xshift=1pt,shadow yshift=-1pt}},
    dep/.default=1cm,
    cli/.style={circle,minimum size=1,fill=white,draw,
                general shadow={fill=gray!60,shadow xshift=1pt,shadow yshift=-1pt}},
    cli/.default=1cm,
    obs/.style={circle,minimum size=1.,fill=gray!20,draw,
                general shadow={fill=gray!60,shadow xshift=1pt,shadow yshift=-1pt}},
    obs/.default=1cm,
    spl/.style={cli=1,append after command={
                  node[circle,draw,dotted,
                       minimum size=1.5cm] at (\tikzlastnode.center) {}}},
    spl/.default=1cm,
    c1/.style={-stealth,very thick,red!80!black},
    v2/.style={-stealth,very thick,yellow!65!black},
    v4/.style={-stealth,very thick,purple!70!black}}
\begin{scope}[local bounding box=graph]
    \node[dep] (Pab) at (-1,-2) {$P_{AB}$};
    \node[obs] (A) at (-2,-3) {$A$};
    \node[obs] (B) at (0,-3) {$B$};
    \node[obs] (C) at (2,-3) {$C$};
    \node[obs] (D) at (4,-3) {$D$};
    \draw[v4] (A) -- (Pab);
    \draw[v4] (B) -- (Pab);
    \node[draw=none] at (2,-1) {$\pi(s) = \prod\limits_{(t_i, X_i)\in s}^{} \pi_{Y_i(x_i)}(t_i) $};
    \node[draw=none] at (2.85, -1.8) {$ = P_{AB}\cdot \eta \times C \cdot \eta \times D \cdot \eta$};
\end{scope}
\end{tikzpicture}
}
\caption{Illustration of the sequence of probability spaces along with the natural forest extension used by $\textsc{Csmc}$. The probability measure $\pi$ is defined on the target space of trees $\mathcal{S}_R$, and not the larger sample space of partial states $ \mathcal{S}_{r<R}$, which are defined on forests. The partial state $s_{r}^{k} = \{P_{AB},C,D\}$ corresponding to $\mathcal{S}_1$ of Fig. 2 (a) is illustrated in Fig. 2 (b) as a set of disjoint components over the four taxa $\{A,B,C,D\}$. Felsenstein's pruning algorithm is used to obtain a marginal likelihood estimate for each tree by passing messages from left and right child nodes and taking the inner product with $\eta$, the stationary state of $\mathbf{Q}$. Each distinct likelihood is then multiplied to assign probability $\pi(s_{r}^k)$ to partial state $s_{r}^k$.}
\label{fig:nfe}
\end{figure*}

\paragraph{Implementation Details.}
Constructing the objective $\mathcal{L}_{CSMC}$ is done iteratively in three steps. The proposal procedure, $q(s_r|s_{r-1})$, requires selecting two trees to coalesce by sampling without replacement. This is accomplished by defining Gumbel-Softmax random variables. The uniform log-probability for each index is perturbed by adding independent Gumbel distributed noise, after which the largest two elements are returned. For example let $U\sim \textsc{Uniform}(0,1)$, we then form $G = \gamma - \log(- \log U)$ so that $G$ can be reparameterized as $G' = G + \gamma$. The \textsc{Resample} procedure 
can also be reparameterized similarly by defining Gumbel-Softmax random variables. 

\paragraph{Extending the Target Measure.} The \textsc{weighting} step requires some care. In order to compute importance weights, the likelihood of a partial state must be evaluated using Felsenstein's pruning algorithm, however the likelihood of Eq. \ref{eq:likelihood} and the probability measure $\pi$ are defined on the target space of trees $\mathcal{S}_R$, and not the larger sample space of partial states $ \mathcal{S}_{r<R}$, which are defined on forests (trees disjoint from each other). The pruning algorithm yields a maximum likelihood estimate for an evolutionary tree, but partial states are defined as collections of disjoint trees or leaf nodes. 
One extension of the target measure $\pi$ into a measure on $\mathcal{S}_{r<R}$ is to treat all elements of the jump chain as trees~\citep{csmc}. The contribution of each of the trees to the likelihood is multiplied 
by taking the inner product of each distribution over characters with $\eta$. 
\begin{definition}[Natural Forest Extension]
 The natural forest extends target measure $\pi$ into forests by taking a product over the trees in the forest:
\begin{equation}
    \pi(s) \coloneqq \prod\limits_{(t_i,X_i)}^{}\pi_{Y_i(x_i)}(t_i)\,.
\end{equation}
\end{definition}
The natural forest extension (\textsc{Nfe}) has the advantage of passing information from the non-coalescing elements to the local weight update. Fig. \ref{fig:nfe} provides an illustration of the \textsc{Nfe} applied to the state consisting of  $\textsc{Pa}(A,B)$ and non-coalescing singletons $\{C\}$ and $\{D\}$.

\section{Nested Combinatorial Sequential Monte Carlo}

A potential drawback of the \textsc{Csmc} method is that partial states are sampled to coalesce uniformly, when many of the resulting topologies correspond to areas of low probability mass. It seems natural to incorporate information from future iterations within the proposal distribution to subsequently guide the exploration of partial states. Adapting the proposal requires marginalizing the intermediate target over future topologies and branch lengths.


\paragraph{Locally Optimal Combinatorial \textsc{Smc}.} 
Choosing a good proposal distribution is key for the effectiveness of \textsc{Smc} methods. The \emph{locally optimal} \textsc{Smc} \citep{doucet2000sequential,naesseth2019elements} chooses the proposal in such a way that all particles have equal weights. This can significantly improve the performance over the standard proposal used in \textsc{Csmc}. The locally optimal proposal based on the natural forest extension is
\begin{align}
    q(s_r  | s_{r-1}) &\propto \frac{\pi(s_r) \nu^-(s_{r-1})}{\pi(s_{r-1})}.
\end{align}
This locally optimal proposal for the \textsc{Csmc} algorithm is computationally intractable, it requires us to exactly marginalize the branch lengths. We use the nested \textsc{Smc} \citep{naesseth2015nested,naesseth2016high} method to overcome this problem.

\paragraph{Nested Combinatorial \textsc{Smc}.}
We provide an overview of Nested Combinatorial Sequential Monte Carlo before presenting a detailed description in Algorithm~\ref{alg:ncsmc}  (we have annotated the overview with steps from the algorithm). \textsc{Ncsmc} iterates over rank events (\textit{line 2}) to perform a standard \textsc{Resample} step also used in \textsc{Csmc} methods (\textit{line 4}). For each sample, \textsc{Ncsmc} enumerates all ${N -r \choose 2}$ possible one-step ahead topologies and samples corresponding  $M$ \textit{sub-branch} lengths (\textit{line 7}). 
We evaluate importance \textit{sub-weights} or \textit{potential functions} for each of these $s_{r}^{k,m}[i]$ sampled look-ahead states (\textit{line 8}). Then, we extend our ancestral partial state $s_{r-1}^{a_{r-1}^k}$ to the new partial state $s_r^k$ (\textit{line 11}) by selecting one of the topologies and a corresponding branch length according to its weight.
Finally, for each sample (\textit{line 12}), we compute its weight by averaging over all the potential functions. An illustration of the procedure is given in Fig.~\ref{fig:ncsmc} of the Appendix.

\paragraph{Variational Nested \textsc{Csmc} Objective.} 
The nested \textsc{Csmc} method described in Algorithm~\ref{alg:ncsmc} can also be used to construct a variational objective:
\begin{align}
    \mathcal{L}_{NCSMC} &
     \coloneqq \underset{Q}{\mathbb{E}}\left[\log \hat{ \mathcal{Z}}_{NCSMC} \right]\,,\\ 
      \widehat{\mathcal{Z}}_{NCSMC} &
      \coloneqq 
      \prod\limits_{r=1}^{R}\left(\frac{1}{K} \sum\limits_{k=1}^{K}w_{r}^k\right).
\end{align}
We refer to the resulting \textsc{Vi} framework as \textsc{Vncsmc}.

\paragraph{Theoretical Justification.}
Nested \textsc{Csmc} is an \textsc{Smc} algorithm on the extended space of all random variables generated by Algorithm~\ref{alg:ncsmc}. This means it keeps keeps the favorable properties of \textsc{Csmc}, such as unbiasedness of the normalization constant estimate and asymptotic consistency. The key property that ensures this for \textsc{Ncsmc} is \emph{proper weighting} \citep{naesseth2015nested,naesseth2016high}. 
\begin{definition}[Proper Weighting]
We say that the random pair $(s_r, w_r)$ are \emph{properly weighted} for the unnormalized distribution $\frac{\pi(s_r) \nu^-(s_{r-1})}{\pi(s_{r-1})}$ if $w_r \geq 0$ almost surely, and for all measurable functions $h$,
\begin{align}
    \mathbb{E}[w_r h(s_r)] &= \int h(s_r) \frac{\pi(s_r) \nu^-(s_{r-1})}{\pi(s_{r-1})} \, \mathrm{d}s_r.
\end{align}
\end{definition}
We formalize the result for \textsc{Ncsmc}, Algorithm~\ref{alg:ncsmc}, in Theorem~\ref{thm:pw}. We say that nested \textsc{Csmc} is an \emph{exact approximation} \citep{naesseth2019elements} of \textsc{Csmc} with the locally optimal proposal.
\begin{theorem}
The particles $s_r^k$ and weights $w_r^k$ generated by Algorithm~\ref{alg:ncsmc} are properly weighted for $\frac{\pi(s_r) \nu^-(s_{r-1})}{\pi(s_{r-1})}$.
\label{thm:pw}
\end{theorem}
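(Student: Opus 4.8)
The plan is to verify the proper-weighting identity of the definition directly from the construction in Algorithm~\ref{alg:ncsmc}, following the exact-approximation / nested-SMC argument of \citep{naesseth2015nested,naesseth2016high}. I would fix a rank $r$ and particle index $k$ and condition throughout on the resampled ancestor $s_{r-1}^{a_{r-1}^k}$, treating it as fixed; proper weighting is a conditional statement, so the ancestor plays no role beyond fixing the local target $\frac{\pi(s_r)\nu^-(s_{r-1})}{\pi(s_{r-1})}$. Write $I=\binom{N-r}{2}$ for the enumerated one-step topologies, let $q_B$ denote the sub-branch-length proposal, and let $s_r^{k,m}[i]$ and $v_r^{k,m}[i]$ (for $i=1,\dots,I$ and $m=1,\dots,M$) be the look-ahead states and their potential functions from lines 7--8.

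First I would show that each look-ahead subsample is itself properly weighted for the local target \emph{restricted to topology $i$}. Since the potential is the importance ratio $v_r^{k,m}[i]=\pi(s_r^{k,m}[i])\nu^-(s_{r-1})/\big(\pi(s_{r-1})\,q_B(\,\cdot\,)\big)$, a single change of measure yields, for any measurable $h$, the identity $\mathbb{E}_{q_B}\!\big[v_r^{k,m}[i]\,h(s_r^{k,m}[i])\big]=\int h(s_r[i])\,\frac{\pi(s_r[i])\nu^-(s_{r-1})}{\pi(s_{r-1})}\,db$, the integral running over the branch lengths of topology $i$. Summing over the enumerated $i$ reconstitutes the full local target, because integrating over $s_r$ means summing over one-step topologies and integrating their branch lengths.

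The crux is then to combine the selection step (line 11) with the averaged weight (line 12). The state $s_r^k$ is drawn from $\{s_r^{k,m}[i]\}$ with probability $v_r^{k,m}[i]/\sum_{i',m'}v_r^{k,m'}[i']$, while $w_r^k$ is the (appropriately normalized) sum of all potentials. Writing $\mathbb{E}[w_r^k\,h(s_r^k)]$ as an expectation that first averages over the selection index, the normalizing sum $\sum v$ in the selection denominator cancels exactly against the corresponding sum inside $w_r^k$, leaving a constant multiple of $\sum_{i,m}v_r^{k,m}[i]\,h(s_r^{k,m}[i])$. Taking the outer expectation over the branch-length randomness and applying the previous step termwise recovers $\int h(s_r)\frac{\pi(s_r)\nu^-(s_{r-1})}{\pi(s_{r-1})}\,ds_r$; non-negativity of $w_r^k$ is immediate since it is an average of non-negative potentials.

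I expect the main obstacle to be bookkeeping the two normalizing constants — the factor $M$ from averaging over subsampled branch lengths and the enumeration over the $I$ topologies — so that the cancellation in the selection step leaves \emph{precisely} the local target with no stray multiplicative factor; this exactness is what upgrades the scheme from merely consistent to an \emph{exact approximation}, and the phrase ``averaging over the potential functions'' must be read with the $I$ topologies summed rather than averaged. A secondary care point is the mixed discrete--continuous nature of $s_r$: the symbol $\int\!\cdots\,ds_r$ has to be interpreted as a sum over topologies composed with a Lebesgue integral over branch lengths, and the change of measure in the first step must respect that product structure. Once these are pinned down, the identity follows, and by the standard nested-SMC argument proper weighting at each rank propagates to unbiasedness of $\widehat{\mathcal{Z}}_{NCSMC}$.
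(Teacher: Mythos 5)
Your proposal is correct and follows essentially the same route as the paper's proof: condition on the ancestor, take the tower property over the selection index $(I,J)$ so that the normalizing sum of potentials cancels against the averaged weight $w_r^k=\frac{1}{ML}\sum_{i}\sum_{m}w_r^{k,m}[i]$, and then apply the importance-sampling identity termwise to recover $\int h(s_r)\,\frac{\pi(s_r)\nu^-(s_{r-1})}{\pi(s_{r-1})}\,\mathrm{d}s_r$. The one place you are actually more careful than the paper is the normalization bookkeeping you flag: the paper's final step treats all $LM$ terms as having identical expectation equal to the full target, which is only coherent because each potential's denominator $q(\cdot\mid s_{r-1})$ carries the $1/L$ uniform topology probability of the original \textsc{Csmc} proposal, so the per-topology integrals rescale and sum to the full mixed discrete--continuous integral exactly as you describe.
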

\begin{proof}
\begin{align}
    &\mathbb{E}[w_r^k h(s_r^k)] = \mathbb{E}\left[w_r^k \cdot h(s_r^{k,J}[I])\right]   \nonumber \\
    &= \mathbb{E}\left[\sum_{i=1}^L \sum_{j=1}^M w_r^k \frac{w_r^{k,j}[i]}{\sum_l \sum_m w_r^{k,m}[l]} h(s_r^{k,j}[i])\right] \nonumber \\
    &= \frac{1}{ML} \sum_{i=1}^L \sum_{j=1}^M\mathbb{E}\left[w_r^{k,j}[i] \cdot h(s_r^{k,j}[i])\right] \nonumber \\
    &= \mathbb{E}\left[w_r^{k,j}[i] \cdot h(s_r^{k,j}[i])\right] = \int h(s_r) \frac{\pi(s_r) \nu^-(s_{r-1})}{\pi(s_{r-1})} \, \mathrm{d}s_r \nonumber
\end{align}
\end{proof}

\begin{algorithm}[tb]
   \caption{Nested Combinatorial Sequential Monte Carlo}
   \label{alg:ncsmc}
    \begin{algorithmic}
   \STATE {\bfseries Input:} {$\mathbf{Y} = \{Y_1,\cdots,Y_M \} \in \Omega^{NxM}$,  $\theta = (\mathbf{Q},\{\lambda_i\}_{i=1}^{|E|})$}
   \end{algorithmic}
   \begin{algorithmic}[1]
   \STATE Initialization. $\forall k$, $s_{0}^k\leftarrow \perp$, $w_{0}^k\leftarrow 1/K$. 
   \FOR{$r=1$ {\bfseries to} $R=N-1$}
   \FOR{$k=1$ {\bfseries to} $K$}
    \STATE \textsc{Resample}~~ 
    $\mathbb{P}(a_{r-1}^k = i) = \frac{w_{r-1}^i}{\sum_{l=1}^K w_{r-1}^l}$
   \FOR{$i=1$ {\bfseries to} $L = {N-r \choose 2}$}
   \FOR{$m=1$ {\bfseries to} $M$}
   \STATE \textsc{Form look-ahead partial state}
   \[ s_{r}^{k,m}[i] \sim q(\cdot|s_{r-1}^{a_{r-1}^k}) \] 
   
   \STATE \textsc{Compute potentials} \[ w_{r}^{k,m}[i] =  \frac{\pi(s_{r}^{k,m}[i])}{\pi(s_{r-1}^{a_{r-1}^k})}\cdot \frac{\nu^{-}(s_{r-1}^{a_{r-1}^k})}{q(s_{r}^{k,m}[i]|s_{r-1}^{a_{r-1}^k})} 
   \]
   \ENDFOR
   \ENDFOR
   \STATE \textsc{Extend partial state}
   \begin{align*}
      s_r^k &= s_r^{k,J}[I], \\
      \mathbb{P}(I = i, J=j) &= \frac{w_r^{k,j}[i]}{\sum_{l=1}^L \sum_{m=1}^M w_r^{k,m}[i]}
   \end{align*}
    \STATE \textsc{Compute weights} \[
    w_r^k = \frac{1}{ML}\sum_{i=1}^L \sum_{m=1}^M w_r^{k,m}[i] 
    \]
   \ENDFOR
   \ENDFOR
    \end{algorithmic}
    \begin{algorithmic}
   \STATE {\bfseries Output:} $s_{R}^{1:K}$ , $w_{1:R}^{1:K}$
   \end{algorithmic}
    
\end{algorithm}

\begin{figure*}[!ht]
\centering
\includegraphics[width=1.0\textwidth]{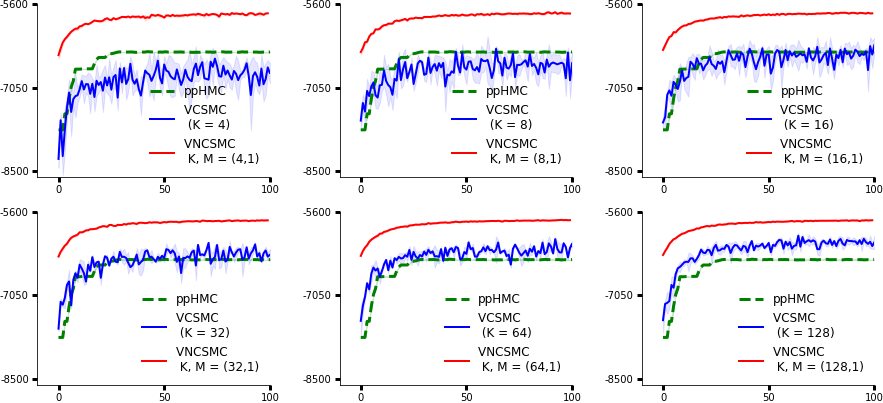}
\caption{Log likelihood values for $\textsc{Vcsmc}$ (blue) with $K$ = $\{4,8,16,32,64,128\}$ samples and $\textsc{Vncsmc}$ (red) with $K = \{4,8,16,32,64,128\}$ and $M=1$ samples on the primates data averaged across 5 random seeds. Higher values of $K$ produce tighter ELBO / larger log likelihood values with lower stochastic gadient noise. $\textsc{Vcsmc}$ with $K \geq 16$ outperforms probabilistic path Hamiltonain Monte Carlo ($ppHMC$) which is shown (green trace) for comparison. $\textsc{Vncsmc}$ requires fewer epochs than $\textsc{Vcsmc}$ to converge and produces tighter ELBO / larger log likelihood values with lower stochastic gadient noise. $\textsc{Vncsmc}$ with $(K,M)=(4,1)$ (top left) outperforms both $ppHMC$ and $\textsc{Vcsmc}$ with $ K = 128$ (bottom right).}
\label{fig:primatesloglik}
\end{figure*} 
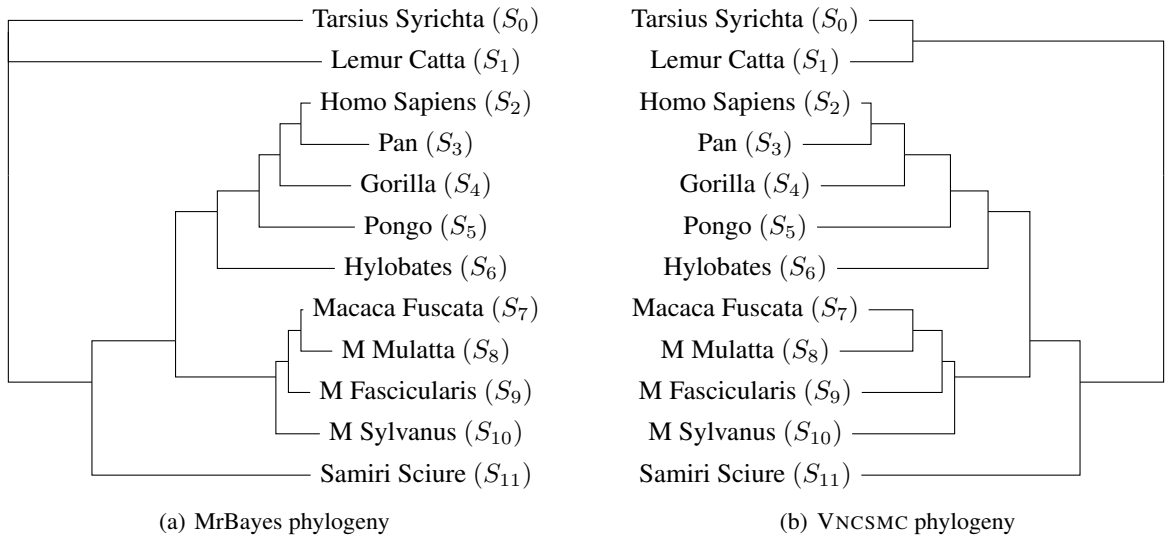
\begin{figure*}[!ht]  
\centering  

\subfigure[MrBayes phylogeny]  
{  
\begin{tikzpicture}[scale=0.55, rotate=90]

\node (a) at (-5.5,0) {Samiri Sciure $(S_{11})$};
\node (b) at (-4.5,0) {M Sylvanus $(S_{10})$};
\node (c) at (-3.5,0) {M Fascicularis $(S_{9})$};
\node (d) at (-2.5,0) {M Mulatta $(S_{8})$};
\node (e) at (-1.5,0) {Macaca Fuscata $(S_{7})$};

\node (de) at (-2,3) {};
\draw (d) |- (de.center);
\draw (e) |- (de.center);
\node (cde) at (-2.75, 3.3) {};
\draw (de.center) |- (cde.center);
\draw (c) |- (cde.center);
\node (bcde) at (-3.125, 3.6) {};
\draw (cde.center) |- (bcde.center);
\draw (b) |- (bcde.center);

\node (f) at (-.5,0) {Hylobates $(S_{6})$};
\node (g) at (.5,0) {Pongo $(S_{5})$};
\node (h) at (1.5,0) {Gorilla $(S_{4})$};
\node (i) at (2.5,0) {Pan $(S_{3})$};
\node (j) at (3.5,0) {Homo Sapiens $(S_{2})$};

\node (ij) at (3, 3) {};
\draw (i) |- (ij.center);
\draw (j) |- (ij.center);
\node (ijh) at (2.25, 3.5) {};
\draw (ij.center) |- (ijh.center);
\draw (h) |- (ijh.center);
\node (ijhg) at (1.375, 4) {};
\draw (g) |- (ijhg.center);
\draw (ijh.center) |- (ijhg.center);
\node (ijhgf) at (.875, 5) {};
\draw (f) |- (ijhgf.center);  
\draw (ijhg.center) |- (ijhgf.center);  

\node (bcdeijhgf) at (-2.25, 6) {};
\draw (ijhgf.center) |- (bcdeijhgf.center);
\draw (bcde.center) |- (bcdeijhgf.center);

\node (abcdeijhgf) at (-3.25, 8) {};
\draw (bcdeijhgf.center) |- (abcdeijhgf.center); 
\draw (a) |- (abcdeijhgf.center); 

\node (k) at (4.5,0) {Lemur Catta $(S_{1})$};
\node (l) at (5.5,0) {Tarsius Syrichta $(S_0)$};
\node (kl) at (5, 10) {};
\draw (k) |- (kl.center);
\draw (l) |- (kl.center);

\node (abcdeijhgfkl) at (1.75, 10) {};
\draw (abcdeijhgf.center) |- (abcdeijhgfkl.center);
\draw (kl) |- (abcdeijhgfkl.center);

\end{tikzpicture}
}  
\qquad
\subfigure[\textsc{Vncsmc} phylogeny]  
{  

\begin{tikzpicture}[scale=.55, rotate=90]

\node (a) at (-5.5,10) {Samiri Sciure $(S_{11})$};
\node (b) at (-4.5,10) {M Sylvanus $(S_{10})$};
\node (c) at (-3.5,10) {M Fascicularis $(S_{9})$};
\node (d) at (-2.5,10) {M Mulatta $(S_{8})$};
\node (e) at (-1.5,10) {Macaca Fuscata $(S_{7})$};

\node (de) at (-2,6) {};
\draw (d) |- (de.center);
\draw (e) |- (de.center);
\node (cde) at (-2.75, 5.3) {};
\draw (de.center) |- (cde.center);
\draw (c) |- (cde.center);
\node (bcde) at (-3.125, 5) {};
\draw (cde.center) |- (bcde.center);
\draw (b) |- (bcde.center);

\node (f) at (-.5,10) {Hylobates $(S_{6})$};
\node (g) at (.5,10) {Pongo $(S_{5})$};
\node (h) at (1.5,10) {Gorilla $(S_{4})$};
\node (i) at (2.5,10) {Pan $(S_{3})$};
\node (j) at (3.5,10) {Homo Sapiens $(S_{2})$};

\node (ij) at (3, 7) {};
\draw (i) |- (ij.center);
\draw (j) |- (ij.center);
\node (ijh) at (2.25, 6.2) {};
\draw (ij.center) |- (ijh.center);
\draw (h) |- (ijh.center);
\node (ijhg) at (1.375, 5.1) {};
\draw (g) |- (ijhg.center);
\draw (ijh.center) |- (ijhg.center);
\node (ijhgf) at (.875, 4.2) {};
\draw (f) |- (ijhgf.center);  
\draw (ijhg.center) |- (ijhgf.center);  

\node (bcdeijhgf) at (-2.25, 3.2) {};
\draw (ijhgf.center) |- (bcdeijhgf.center);
\draw (bcde.center) |- (bcdeijhgf.center);

\node (abcdeijhgf) at (-3.25, 2) {};
\draw (bcdeijhgf.center) |- (abcdeijhgf.center); 
\draw (a) |- (abcdeijhgf.center); 

\node (k) at (4.5,10) {Lemur Catta $(S_{1})$};
\node (l) at (5.5,10) {Tarsius Syrichta $(S_0)$};
\node (kl) at (5, 6) {};
\draw (k) |- (kl.center);
\draw (l) |- (kl.center);

\node (abcdeijhgfkl) at (1.75, 0) {};
\draw (abcdeijhgf.center) |- (abcdeijhgfkl.center);
\draw (kl.center) |- (abcdeijhgfkl.center);

\end{tikzpicture}

}
\caption{ MrBayes vs \textsc{Vncsmc} phylogeny on the primate mitochondrial DNA dataset. The data consists of 12 taxa $\{S_0,\cdots, S_{11}\}$ over 898 sites on the genome. 
The maximum likelihood topology returned by \textsc{Vncsmc} corresponds to that of Mr Bayes. The bottom clade partitions monkeys, while the central and top clades partition
hominids and prosimians. MrBayes uses 20,000 iterations of \textsc{Mcmc} in contrast to \textsc{Vncsmc} which uses 256 samples.
    }
    \label{fig:primatestree}
\end{figure*}

\section{Experiments}
\label{results}
We evaluate \textsc{Vcsmc} and \textsc{Vncsmc} on three tasks: (i) a standard dataset of primate mitochondrial DNA, (ii) on the complete 36 kilobase genomes of 17 species of Betacoronavirus, and (iii) on 7 large taxa benchmarks datasets ranging from 27 to 64 taxa. For experiments using the same initialization of likelihood and prior, the proposed methods converge to higher log-marginal likelihood values than existing methods. Additionally, they can be more easily adopted to a variety of models, with arbitrary settings of parameters $\theta$. \textsc{Vcsmc} and \textsc{Vncsmc} also scale well with the number of sites in input sequences.

\begin{table*}[!ht]
\centering
\caption{Log-marginal likelihood estimates of different variational inference techniques across 7 benchmark datasets for Bayesian phylogenetic inference. Results reported by \textsc{Vbpi}~\citep{zhang2018variational} and \textsc{Vbpi-Nf}~\citep{zhang2020improved} were obtained by (i) using 10 replicates of 10,000 maximum likelihood bootstrap trees ~\citep{10.1093/molbev/mst024} to obtain topologies defining the support of the conditional probability tables and (ii) performing 400,000 parameter updates. \textsc{Vcsmc} does not require bootstrapped or \textsc{Mcmc} tree topologies in order to learn parameters. We give $\textsc{Vcsmc}$ 2048 particles and evaluate the likelihood after 100 parameter updates. Results for $\textsc{Vcsmc}$ and $\textsc{Vcsmc}$ (JC) are averaged over three random seeds. \textsc{Vcsmc} consistently explores higher probability phylogenies than \textsc{Vbpi} and \textsc{Vbpi-Nf} without the use of preloaded topologies.}
\vspace{2mm}

\begin{tabular} {cccccccc}
\toprule
\multicolumn{8} {  c  }{Log Marginal Likelihood}\\
Dataset  & Reference & \# Taxa $(N)$ & \# Sites $(S)$ & \textsc{Vbpi} & \textsc{Vbpi-Nf} & \textsc{Vcsmc} & \textsc{Vcsmc (JC)}  \\ 
\midrule
DS1 & \cite{10.1093/oxfordjournals.molbev.a040628} & 27 & 1949 & -7108.4  & -7108.4 & \text{-5929.8} & -6906.7 \\
DS2 & \cite{Garey1996} & 29 & 2520 & -26367.7 & -26367.7 &   \text{-14160.4} & -23252.6  \\ 
DS3 & \cite{10.1080/10635150390235557} & 36 & 1812 & -33735.1 & -33735.1  &  \text{-17460.5} & -33177.8  \\
DS4 & \cite{doi:10.1080/15572536.2004.11833059} & 41 & 1137 & -13329.9 & -13329.9 & \text{-11251.9} & -12232.6 \\
DS5 & \cite{lakner} & 50 & 378 & -8214.5  & -8214.5 & \text{-5797.1} & -7921.2 \\
DS6 & \cite{doi:10.1080/00275514.2001.12063167} & 50 & 1133 & -6724.3 & -6724.3 &  \text{-5216.5} & -6575.5 \\
DS7 & \cite{doi:10.1080/00275514.2001.12061283} & 64 & 1008 & -8650.6 & -8650.4 & \text{-5847.5} & -6781.5 \\
\bottomrule
\end{tabular}
\label{table:ds1to8}
\end{table*}

\paragraph{Primate Mitochondrial DNA.}

We evaluate \textsc{Vcsmc} on a benchmark dataset of nucleotide sequences of homologous fragments of primate mitochondrial DNA~\citep{10.1093/oxfordjournals.molbev.a040524}. The dataset consists of 12 taxa $\{S_0,\cdots, S_{11}\}$ over 898 sites admitting 13,749,310,575 distinct tree topologies. The set of taxa includes five species of homonoids, four species of old world monkeys, one species of new world monkey and two species of prosimians. \textsc{Vcsmc} is run with $K=\{4,8,16,32,64,128\}$ particles, whereas \textsc{Vncsmc} is run with $K = \{4,8,16,32,64,128\}$ and $M=1$ particles, each averaged over 5 random seeds. Fig. \ref{fig:primatesloglik} shows higher values of $K$ produce larger log-marginal likelihood values (tighter ELBO values) with lower stochastic gradient noise. $\textsc{Vcsmc}$ (blue) with $K \geq 16$ outperforms probabilistic path Hamiltonain Monte Carlo ($ppHMC$) shown (green trace) for comparison.  $\textsc{Vncsmc}$ (red) requires fewer epochs than $\textsc{Vcsmc}$ to converge and produces tighter ELBO / larger log-marginal likelihood values with lower stochastic gadient noise. \textsc{Vncsmc} with $(K,M)= (4,1)$ (top left) outperforms both $ppHMC$ and $\textsc{Vcsmc}$ with $ K = 128$ (bottom right).

Fig.~\ref{fig:primatestree} provides a single maximum likelihood phylogeny selected from a run of \textsc{Vncsmc} using $K,M=(256,1)$ particles, along with a phylogeny from Mr Bayes on the same dataset. The topology returned by \textsc{Vncsmc} corresponds to that of Mr Bayes. The bottom clade partitions monkeys, while central and top clades partition
hominids and prosimians respectively. 


\paragraph{Betacoronavirus Data.} The evolutionary origin of \textsc{Sars-CoV-II} and the development of its genetic variants is an open question of paramount importance in both virology and in public health. At a high level, the species \textsc{Sars-CoV-II} belongs to the genera of betacoronaviruses, which include \textsc{Oc43} and \textsc{Hku1} (which cause the common cold) of lineage A, \textsc{Sars-CoV} and \textsc{Sars-CoV-II} (which causes the disease \textsc{Covid-19}) of lineage B, and \textsc{Mers-CoV-II} (which causes the disease \textsc{Mers}) of lineage C~\citep{abf4c405af0c4229bee83cefc6b9501f}. The exact origin of \textsc{Sars-CoV-II} however is unknown; different approaches to phylogenetic inference produce statistically incompatible results~\citep{10.1093/molbev/msaa316}. Coronaviruses have relatively large genomes ranging from 26-32 kilobases, and performing analyses on the full genomes is often a challenge. Recently, it has been argued that viral recombination in betacoronaviruses often encompasses the receptor binding domain (\textsc{Rbd}) of the spike gene~\citep{Patino-Galindo2020.02.10.942748}. This process is thought to have produced a recombination event at least 11 years ago in an ancestor of \textsc{Sars-CoV-II}~\citep{Patino-Galindo2020.02.10.942748}. We use \textsc{Vncsmc} to analyze the complete genomes for 17 species of Betacoronavirus downloaded from the NCBI Viral Genomes Resource~\citep{10.1093/nar/gku1207}. Multiple Sequence Aligmnent using Clustal was performed and each nucleotide was one-hot encoded as a vector, producing input sequences with 36,889 sites. Fig. \ref{fig:betacoronavirus} of the Appendix provides the maximum likelihood phylogeny from a \textsc{Vncsmc} run using $K,M=(256,1)$ particles. The result shows that the phylogeny partitions four lineages into clades: Embecovirus (\textit{lineage A}), Sarbecovirus (\textit{lineage B} including \textsc{Sars-CoV} and \textsc{Sars-CoV-II}), Merbecovirus (\textit{lineage C}), and Nobecovirus (\textit{lineage D}) \citep{clwppgprgpkn13, whly10, 10.1371/journal.pone.0194527}.

\paragraph{Large Taxa Benchmarks.} 
We evaluate $\textsc{Vcsmc}$ on 7 large benchmark datasets for Bayesian phylogenetic inference \citep{10.1093/oxfordjournals.molbev.a040628, Garey1996, 10.1080/10635150390235557, doi:10.1080/15572536.2004.11833059, lakner, doi:10.1080/00275514.2001.12063167,  doi:10.1080/00275514.2001.12061283}. Each dataset ranges from 27 to 64 eukaryote species with 378 to 2520 sites. Table \ref{table:ds1to8} provides the marginal likelihood values for various methods. \textsc{Vbpi}~\citep{zhang2018variational} and \textsc{Vbpi-Nf}~\citep{zhang2020improved} both learn a simplified model of molecular evolution referred to as Jukes-Cantor (JC), which fixes the transition matrix~\citep{JUKES196921}. For a fair comparison, we report \textsc{Vcsmc} (JC) results in addition to the harder task of also learning the transition matrix. Results reported by \textsc{Vbpi} and \textsc{Vbpi-Nf} were obtained by (i) using 10 replicates of 10,000 maximum likelihood bootstrap trees ~\citep{10.1093/molbev/mst024} to obtain topologies defining the support of the conditional probability tables and (ii) performing 400,000 parameter updates. Without bootstrap trees, the conditional probability tables for \textsc{Vbpi} and \textsc{Vbpi-Nf} scale exponentially with the number of taxa~\citep{zhang2018variational}. \textsc{Vcsmc} does not restrict the support of the tree topologies and instead leverages \textsc{Csmc} to compute a lower bound.  We give $\textsc{Vcsmc}$ 2048 particles and evaluate the likelihood after 100 parameter updates, averaged over three random seeds. Both \textsc{Vcsmc} and \textsc{Vcsmc} (JC) explore higher probability spaces than \textsc{Vbpi} and \textsc{Vbpi-Nf}.

\paragraph{Empirical Running Times.} We report the empirical running times of \textsc{Vcsmc} and \textsc{Vncsmc} on the primates dataset and highlight the results in Table~\ref{table:runningtimes} of the Appendix. Experiments were performed on a 2.4GHz 8-core Intel i9 processor Macbook pro with 64 GB memory and no GPU utilization. We note that alternative methods are designed for solving simpler problems in both inference and learning making any runtime comparisons indirect. For instance, \textsc{Vbpi} and \textsc{Vbpi-Nf} use precomputed topologies, while $ppHMC$ support Jukes-Cantor models.  \textsc{Vcsmc} runs on the primates dataset at an average speed of 19.34 iterations per second (\textit{it/s}) with $K$ = 4 and an average of 2.25 seconds per iteration (\textit{s/it}) with $K$ = 256. \textsc{Vncsmc} runs in 3.89 seconds per iteration with $K$ = 4 and 21.77 seconds per iteration with $K$ = 256. These numbers can be improved by leveraging GPU utilization. In contrast, MrBayes in Figure \ref{fig:primatestree} takes 12 seconds with 20,000 iterations, and the minimum it would take to converge on the primates dataset is $\sim$2,000 iterations, implying a runtime of $\sim$1.2 seconds. We observe that the first epoch of \textsc{Vcsmc} and \textsc{Vncsmc} is equivalent to the inference task and runs faster than MrBayes.

\section{Discussion}
\paragraph{Computational Complexity.} The locally optimal proposal in \textsc{Ncsmc} requires additional computational complexity to marginalize the intermediate target densities in exchange for a more informed exploration of partial states. \textsc{Ncsmc} costs $\mathcal{O}(KN^3M)$ in contrast to $\mathcal{O}(KNM)$ for \textsc{Csmc}. Empirically, \textsc{Ncsmc} with small $K,M$ produces a more accurate posterior approximation than \textsc{Csmc} with larger $K$ (see Fig.~\ref{fig:primatesloglik}). \textsc{Ncsmc} can accommodate a large number of particles with low memory overhead, however maintaining the computational graph and applying the sum-product algorithm symbolically for each of the $K$ samples and $M$ sub-samples, along with evaluating gradients for each of these terms places a practical restriction on the values of $K,M$ and $N$ used with \textsc{Vncsmc} without GPU utilization. Alternative implementations of Bayesian phylogenetic inference are computationally intensive. 
While the process of enumerating the ${N-r \choose 2}$ topologies across rank events cannot be avoided, we find that choosing $K$ as large as possible and $M=1$ is a useful heuristic for producing good results. For example, $K,M=(256,1)$ can be run on the betacoronavirus data with $N=17$ and 36,889 sites (see Fig.~\ref{fig:betacoronavirus}) without GPU utilization. One advantage of \textsc{Vcsmc} and \textsc{Vncsmc} is the ability to use minibatch iteration to speed up training. The experiments were
trained using \textsc{Adam} with a batch size $B = S/4$. Opportunities exist to parallelize \textsc{Vcsmc} and leverage GPU optimization which we expect would produce significant performance gains on DS1-DS7 as $K$ increases. 

\paragraph{Effective Sample Size.} One pertinent theoretical question concerns the relationship between the effective sample size (\textsc{Ess}), the number of samples $K$ and the number of taxa $N$.  The \textsc{Ess} measures the diversity among samples and is defined as $\textsc{Ess} = (\sum_i w_i)^2 / \sum_i w_i^2$ where $w_i$ are the unnormalized weights. We report \textsc{Ess} values on the primates data in Table~\ref{table:runningtimes} of the Appendix. While an \textsc{Ess} close to $K$ is not sufficient to ensure a good approximation, it is a necessary condition. We find near optimal \textsc{Ess} values across all choices of $K$ for both \textsc{Vcsmc} and \textsc{Vncsmc}. The theoretical foundations for developing lower bounds on \textsc{Ess} for a given value of $K$ have only been developed in the context of \textit{online inference}, where the posterior distribution is updated as new sequence data becomes available~\citep{10.1093/sysbio/syx087}. We leave theoretical questions of \textsc{Ess} and online extensions of \textsc{Vcsmc} for future work.

\paragraph{Contacts Outside of Phylogenetic Inference.} \textsc{Vcsmc} and \textsc{Vncsmc} may be adapted to a wide class of problems outside of phylogenetic inference. In principle, any generative model of data simulated by a Markov tree can be fit using \textsc{Vcsmc} and \textsc{Vncsmc}. Coalescent models for heirarchical Bayesian clustering and diffusion trees~\citep{teh2009bayesian,NIPS2012_c73dfe6c,10.1214/10-AOAS435} are examples of alternative probabilistic approaches involving distributions over latent trees that may be suited for \textsc{Vcsmc}. The nested \textsc{Csmc} algorithm may also be used to simulate approximate solutions to other combinatorial optimization tasks. Combinatorial Monte Carlo methods are used to approximate the number of self-avoiding random walks on the lattice~\citep{SOKAL1996172,SHIRAI2013}. Another point of contact is the reconstruction of jet structures in particle physics for the analysis of data from experiments at the Large Hadron Collider at CERN~\citep{hche2014introduction}. Jet reconstruction algorithms are typically based on greedy approximation methods~\citep{Cacciari2008,Dokshitzer1997}, however \textsc{Vcsmc} and \textsc{Vncsmc} may be particularly suited for this domain. The aforementioned extensions are open directions for further development.

\paragraph{Conclusion.} 
We have introduced \textsc{Vcsmc}, a powerful framework for both inference and learning in Bayesian phylogenetics. \textsc{Vcsmc} is the first method to establish the use of variational sequential search to learn distributions over intricate combinatorial structures, uncovering connections between \textsc{Vi} and \textsc{Smc}.
We have introduced \textsc{Ncsmc}, and proved that it provides an exact approximation to the locally optimal proposal for \textsc{Csmc}.  We have used \textsc{Ncsmc} to define a second objective, \textsc{Vncsmc} which yields tighter lower bounds than \textsc{Vcsmc}. \textsc{Vcsmc} and \textsc{Vncsmc} outperform existing methods on a range of tasks. A TensorFlow implementation of both \textsc{Vcsmc} and \textsc{Vncsmc} is available online at \url{https://github.com/amoretti86/phylo}.

\begin{acknowledgements} 
We thank the reviewers for their helpful feedback. We acknowledge funding from NIH/NCI grant U54CA209997 and two NIH shared instrumentation grants, S10 OD012351 and S10 OD021764. This work is also supported by ONR N00014-17-1-2131, ONR N00014-15-1-2209, DARPA SD2 FA8750-18-C-0130, Amazon, Sloan Foundation, and the Simons Foundation.
\end{acknowledgements}

\bibliography{uai2021-template}

\onecolumn
\section*{Appendix}
\label{appendix}


\begin{algorithm}[h!]
   \caption{Combinatorial Sequential Monte Carlo}
   \label{alg:csmc}
    \begin{algorithmic}
   \STATE {\bfseries Input:} $\mathbf{Y} = \{Y_1,\cdots,Y_M \} \in \Omega^{NxM}$,  $\theta = (\mathbf{Q},\{\lambda_i\}_{i=1}^{|E|})$ 
   \end{algorithmic}
   \begin{algorithmic}[1]
   \STATE Initialization. $\forall k$, $s_{0}^k\leftarrow \perp$, $w_{0}^k\leftarrow 1/K$. 
   \FOR{$r=0$ {\bfseries to} $R=N-1$}
   \FOR{$k=1$ {\bfseries to} $K$}
   \STATE  \textsc{Resample} \[ 
    \mathbb{P}(a_{r-1}^k = i) = \frac{w_{r-1}^i}{\sum_{l=1}^K w_{r-1}^l}
   \]
   \STATE \textsc{Extend partial state} \[ s_{r}^k \sim q(\cdot|s_{r-1}^{a_{r-1}^k})
   \]
   \STATE \textsc{Compute weights} \[ 
     w_{r}^k = w(s_{r-1}^{a_{r-1}^k},s_{r}^k) = \frac{\pi(s_{r}^k)}{\pi(s_{r-1}^{a_{r-1}^k})}\cdot \frac{\nu^{-}(s_{r-1}^{a_{r-1}^k})}{q(s_{r}^k |s_{r-1}^{a_{r-1}^k})}
   \]
   \ENDFOR
   \ENDFOR
   \STATE {\bfseries Output:} $s_{R}^{1:K}$ , $w_{1:R}^{1:K}$
    \end{algorithmic}
    
\end{algorithm}

The proposal distribution for \textsc{Csmc} and approximate posterior for \textsc{Vcsmc} can be written explicitly as follows:
\begin{equation}
    Q_{\phi,\psi}\left(\mathcal{T}_{1:R}^{1:K},\mathcal{B}_{1:R}^{1:K}, a_{1:R-1}^{1:K}\right) \coloneqq
    \left(\prod\limits_{k=1}^{K}q_{\phi}(\mathcal{T}_{1}^{k})\cdot q_{\psi}(\mathcal{B}_{1}^{k}) \right) \cdot 
    \prod\limits_{r=2}^{R}\prod\limits_{k=1}^{K}\left[ \frac{w_{r-1}^{a_{r-1}^k}}{\sum_{l=1}^K w_{r-1}^l}\cdot
    q_{\phi}\left(\mathcal{T}_{r}^{k}|\mathcal{T}_{r-1}^{a_{r-1}^k}\right)\cdot q_{\psi}\left(\mathcal{B}_{r}^{k}|\mathcal{B}_{r-1}^{a_{r-1}^k},\mathcal{T}_{r-1}^{a_{r-1}^k}\right)\right] 
    .
    \label{eq:fullposterior}
\end{equation}
State $s_{r}^k = (\mathcal{T}_{r}^k,\mathcal{B}_r^{k})$ is sampled by proposing forest $\mathcal{T}_{r}^{k} \sim q_{\phi}(\cdot |\mathcal{T}_{r-1}^{a_{r-1}^k})$ and branch lengths $\mathcal{B}_{r}^{k}\sim q_{\psi}(\cdot|\mathcal{B}_{r-1}^{a_{r-1}^k},\mathcal{T}_{r-1}^{a_{r-1}^k})$ from \textsc{Uniform} and \textsc{Exponential} distributions corresponding to Eq. \ref{eq:posterior} with $\phi$ and $\psi$ denoting discrete and continuous terms.

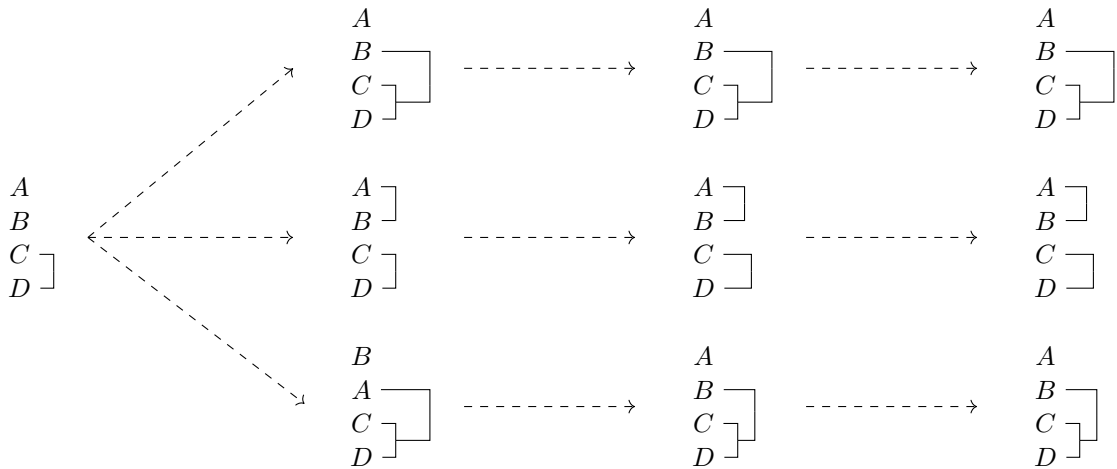
\begin{figure*}[hb!]
    \centering
    \begin{tikzpicture}[sloped, scale=0.9]
    \tikzset{
    dep/.style={circle,minimum size=1,fill=orange!20,draw=orange,
                general shadow={fill=gray!60,shadow xshift=1pt,shadow yshift=-1pt}},
    dep/.default=1cm,
    cli/.style={circle,minimum size=1,fill=white,draw,
                general shadow={fill=gray!60,shadow xshift=1pt,shadow yshift=-1pt}},
    cli/.default=1cm,
    obs/.style={circle,minimum size=1,fill=gray!20,draw,
                general shadow={fill=gray!60,shadow xshift=1pt,shadow yshift=-1pt}},
    obs/.default=1cm,
    spl/.style={cli=1,append after command={
                  node[circle,draw,dotted,
                       minimum size=1.5cm] at (\tikzlastnode.center) {}}},
    spl/.default=1cm,
    c1/.style={-stealth,very thick,red!80!black},
    v2/.style={-stealth,very thick,yellow!65!black},
    v4/.style={-stealth,very thick,purple!70!black}}
    
        \node(Resample) at (-4.5,6) {\textsc{Enumerate Topologies}};
        
        
        
        \node (E) at (-7,2.5) {$A$};
        \node (F) at (-7,2) {$B$};
        \node (G) at (-7,1.5) {$C$};
        \node (H) at (-7,1) {$D$};
        \node (gh) at (-6.5,1.25) {};
        \draw (G) -| (gh.center) {};
        \draw (H) -| (gh.center) {};
        
        \coordinate (bone) at (-6, 1.75);
        
        
        \coordinate (cone) at (-6, -.75);
    
        \node(Propose) at (.75,6) {\textsc{Subsample Branch Lengths}};
        
        \node (AA) at (-2,5) {$A$};
        \node (BB) at (-2,4.5) {$B$};
        \node (CC) at (-2,4) {$C$};
        \node (DD) at (-2,3.5) {$D$};
        \node (ccdd) at (-1.5,3.75) {};
        \draw (CC) -| (ccdd.center);
        \draw (DD) -| (ccdd.center);
        \node (bccdd) at (-1.,4) {};
        \draw (ccdd.center) -| (bccdd.center);
        \draw (BB) -| (bccdd.center);
        
        \coordinate (atwo) at (-3, 4.25);
        \coordinate (athree) at (-.5, 4.25);

        \node (EE) at (-2,2.5) {$A$};
        \node (FF) at (-2,2) {$B$};
        \node (GG) at (-2,1.5) {$C$};
        \node (HH) at (-2,1) {$D$};
        \node (gghh) at (-1.5,1.25) {};
        \draw (GG) -| (gghh.center);
        \draw (HH) -| (gghh.center);
        \node (eeff) at (-1.5,2) {};
        \draw (EE) -| (eeff.center);
        \draw (FF) -| (eeff.center);

        \coordinate (btwo) at (-3, 1.75);
        \coordinate (bthree) at (-.5, 1.75);
        \draw[->,dashed] (bone) -- (btwo);
        \draw[->,dashed] (bone) -- (atwo);
        \draw[->,dashed] (bone) -- (ctwo);
        
        \node (II) at (-2,0) {$B$};
        \node (JJ) at (-2,-.5) {$A$};
        \node (KK) at (-2,-1) {$C$};
        \node (LL) at (-2,-1.5) {$D$};
        \node (kkll) at (-1.5, -1.25) {};
        \draw (KK) -| (kkll.center);
        \draw (LL) -| (kkll.center);
        \node (ikl) at (-1., -.875) {};
        \draw (kkll.center) -| (ikl.center);
        \draw (JJ) -| (ikl.center);
        
        \coordinate (ctwo) at (-3, -.75);
        \coordinate (cthree) at (-.5, -.75);

        \node (U) at (3,5) {$A$};
        \node (V) at (3,4.5) {$B$};
        \node (W) at (3,4) {$C$};
        \node (X) at (3,3.5) {$D$};
        \node (wwxx) at (3.5, 3.75) {};
        \draw (W) -| (wwxx.center);
        \draw (X) -| (wwxx.center);
        \node (vwwxx) at (4, 4.25) {};
        \draw (wwxx.center) -| (vwwxx.center);
        \draw (V) -| (vwwxx.center);
        \coordinate (afour) at (2,4.25);
        \draw[->, dashed] (athree) -- (afour);
        
        \node (M) at (3,2.5) {$A$};
        \node (N) at (3,2) {$B$};
        \node (O) at (3,1.5) {$C$};
        \node (P) at (3,1) {$D$};
        \node (MN) at (3.6, 2.25) {};
        \draw (M) -| (MN.center);
        \draw (N) -| (MN.center);
        \node (OP) at (3.7, 1.25) {};
        \draw (O) -| (OP.center);
        \draw (P) -| (OP.center);
        \coordinate (bfour) at (2,1.75);
        \draw[->, dashed] (bthree) -- (bfour);

        \node (Q) at (3,0.) {$A$};
        \node (R) at (3,-.5) {$B$};
        \node (S) at (3,-1.) {$C$};
        \node (T) at (3,-1.5) {$D$};
        \node (ST) at (3.5, -1.25) {};
        \draw (S) -| (ST.center);
        \draw (T) -| (ST.center);
        \node (RST) at (3.75,-.825) {};
        \draw (ST.center) -| (RST.center);
        \draw (R) -| (RST.center);
        
        \coordinate (cfour) at (2,-.75);
        \draw[->, dashed] (cthree) -- (cfour);

        \node(Weighting) at (5.75,6) {\textsc{Compute Potentials}};
        \node (Uu) at (8,5) {$A$};
        \node (Vv) at (8,4.5) {$B$};
        \node (Ww) at (8,4) {$C$};
        \node (Xx) at (8,3.5) {$D$};
        \node (wwxx) at (8.5, 3.75) {};
        \draw (Ww) -| (wwxx.center);
        \draw (Xx) -| (wwxx.center);
        \node (vwwxx) at (9., 4.25) {};
        \draw (Vv) -| (vwwxx.center);
        \draw (wwxx.center) -| (vwwxx.center);
        
        \coordinate (afive) at (4.5,4.25);
        \coordinate (asix) at (7,4.25);
         \draw[->, dashed] (afive) -- (asix);
        
        \node (Mm) at (8,2.5) {$A$};
        \node (Nn) at (8,2) {$B$};
        \node (Oo) at (8,1.5) {$C$};
        \node (Pp) at (8,1) {$D$};
        \node (MN) at (8.6, 2.25) {};
        \draw (Mm) -| (MN.center);
        \draw (Nn) -| (MN.center);
        \node (OP) at (8.7, 1.25) {};
        \draw (Oo) -| (OP.center);
        \draw (Pp) -| (OP.center);
        
        \coordinate (bfive) at (4.5,1.75);
        \coordinate (bsix) at (7,1.75);
        \draw[->, dashed] (bfive) -- (bsix);
        
        \node (Qq) at (8,0.) {$A$};
        \node (Rr) at (8,-.5) {$B$};
        \node (Ss) at (8,-1.) {$C$};
        \node (Tt) at (8,-1.5) {$D$};
        \node (ST) at (8.5, -1.25) {};
        \draw (Tt) -| (ST.center);
        \draw (Ss) -| (ST.center);
        \node (RST) at (8.75, -.875) {};
        \draw (ST.center) -| (RST.center);
        \draw (Rr) -| (RST.center);
        
        \coordinate (cfive) at (4.5,-.75);
        \coordinate (csix) at (7,-.75);
        \draw[->, dashed] (cfive) -- (csix);

    \end{tikzpicture}
    \caption{Overview of the \textsc{Ncsmc} framework. 
    The enumerated topologies for state  $\{A,B,\{C,D\}\}$ are  (\textit{top}): $\{A,\{B,\{C,D\}\}\}$, (\textit{center}): $\{\{A,B\},\{C,D\}\}$  and (\textit{bottom}): $\{B,\{A,\{C,D\}\}\}$ . $M=1$ \textit{sub-branch} lengths are sampled for each edge. \textit{Sub-weights} or \textit{potentials} are computed (right). A single candidate is sampled to form the new partial state. }
    \label{fig:ncsmc}
\end{figure*}

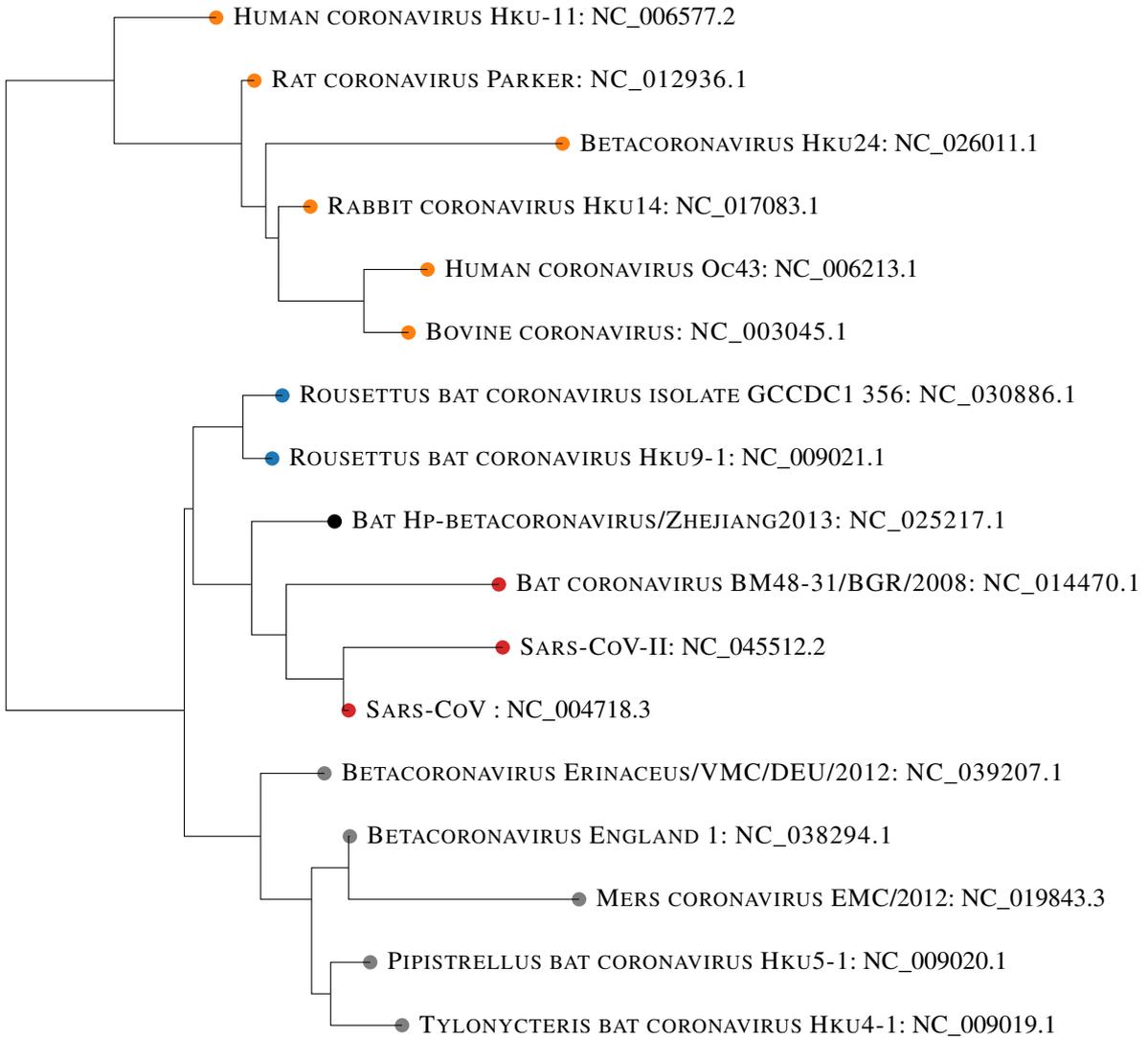
\begin{figure*}[!ht]  
\centering  
\begin{tikzpicture}[rotate=90,scale=1.75]

\node (s15)[circle,fill={rgb,255:red,255; green,127; blue,14},inner sep=2pt,label=right:$\textsc{Bovine coronavirus: NC\_003045.1}$] at (3.5,1.84) {}; 
\node (s16)[circle,fill={rgb,255:red,255; green,127; blue,14},inner sep=2pt,label=right:$\textsc{Human coronavirus }\textsc{Oc}\text{43: NC\_006213.1}$] at (4,1.69) {}; 
\node (02) at (3.75,2.19) {}; 
\node (s12)[circle,fill={rgb,255:red,255; green,127; blue,14},inner sep=2pt,label=right:$\textsc{Rabbit coronavirus }\textsc{Hku14}\text{: NC\_017083.1}$] at (4.5,2.61) {}; 
\node (03) at (4.25,2.86) {}; 
\node (s13)[circle,fill={rgb,255:red,255; green,127; blue,14},inner sep=2pt,label=right:$\textsc{Betacoronavirus }\textsc{Hku24}\text{: NC\_026011.1}$] at (5,0.63) {}; 
\node (06) at (4.5,2.96) {}; 
\node (s11)[circle,fill={rgb,255:red,255; green,127; blue,14},inner sep=2pt,label=right:$\textsc{Rat coronavirus Parker: NC\_012936.1}$] at (5.5,3.05) {}; 
\node (07) at (5,3.15) {}; 
\node (s14)[circle,fill={rgb,255:red,255; green,127; blue,14},inner sep=2pt,label=right:$\textsc{Human coronavirus }\textsc{Hku-1}\text{1: NC\_006577.2}$] at (6,3.35) {}; 

\node (s6)[circle,fill={rgb,255:red,214; green,39; blue,40},inner sep=2pt,label=right:$\textsc{Sars-CoV} \text{ : NC\_004718.3}$] at (0.5,2.31) {}; 
\node (s8)[circle,fill={rgb,255:red,214; green,39; blue,40},inner sep=2pt,label=right:$\textsc{Sars-CoV-II} \text{: NC\_045512.2}$] at (1,1.1) {}; 

\node (04) at (0.75,2.35) {}; 
\node (s9)[circle,fill={rgb,255:red,214; green,39; blue,40},inner sep=2pt,label=right:$\textsc{Bat coronavirus BM48-31/BGR/2008: NC\_014470.1}$] at (1.5,1.13) {}; 

\node (05) at (1.1,2.8) {}; 
\node (s10)[circle,fill,inner sep=2pt,label=right:$\textsc{Bat Hp-betacoronavirus/Zhejiang2013: NC\_025217.1}$] at (2,2.42) {}; 

\node (s4)[circle,fill={rgb,255:red,31; green,119; blue,180},inner sep=2pt,label=right:$\textsc{Rousettus bat coronavirus }\textsc{Hku9-1}\text{: NC\_009021.1}$] at (2.5,2.91) {}; 
\node (s5)[circle,fill={rgb,255:red,31; green,119; blue,180},inner sep=2pt,label=right:$\textsc{Rousettus bat coronavirus isolate GCCDC1 356: NC\_030886.1}$] at (3,2.83) {}; 

\node (13) at (1.5,3.07) {}; 
\node (12) at (2.75,3.14) {}; 

\node (s1)[circle,fill={rgb,255:red,127; green,127; blue,127},inner sep=2pt,label=right:$\textsc{Tylonycteris bat coronavirus }\textsc{Hku4-1}\text{: NC\_009019.1}$] at (-2,1.89) {}; 
\node (s7)[circle,fill={rgb,255:red,127; green,127; blue,127},inner sep=2pt,label=right:$\textsc{Pipistrellus bat coronavirus }\textsc{Hku5-1}\text{: NC\_009020.1}$] at (-1.5,2.14) {}; 
\node (s0)[circle,fill={rgb,255:red,127; green,127; blue,127},inner sep=2pt,label=right:$\textsc{Mers} \textsc{ coronavirus }\textsc{EMC}\text{/2012: NC\_019843.3}$] at (-1,0.5) {}; 
\node (s2)[circle,fill={rgb,255:red,127; green,127; blue,127},inner sep=2pt,label=right:$\textsc{Betacoronavirus England 1: NC\_038294.1}$] at (-0.5,2.3) {}; 

\node (09) at (-1.75,2.45) {}; 
\node (01) at (-0.75,2.31) {}; 

\node (10) at (-1,2.6) {}; 
\node (s3)[circle,fill={rgb,255:red,127; green,127; blue,127},inner sep=2pt,label=right:$\textsc{Betacoronavirus Erinaceus/VMC/DEU/2012: NC\_039207.1}$] at (0,2.5) {}; 

\node (11) at (-0.5,3) {}; 
\node (14) at (2.1,3.53) {}; 

\node (08) at (5.5,4.15) {}; 
\node (15) at (0.5,3.6) {}; 
\node (16) at (2,5) {}; 

\draw  (s1.center) |- (09.center);
\draw  (s7.center) |- (09.center);
\draw  (s0.center) |- (01.center);
\draw  (s2.center) |- (01.center);
\draw  (09.center) |- (10.center);
\draw  (01.center) |- (10.center);
\draw  (10.center) |- (11.center);
\draw  (s3.center) |- (11.center);
\draw  (s6.center) |- (04.center);
\draw  (s8.center) |- (04.center);
\draw  (04.center) |- (05.center);
\draw  (s9.center) |- (05.center);
\draw  (05.center) |- (13.center);
\draw  (s10.center) |- (13.center);
\draw  (s4.center) |- (12.center);
\draw  (s5.center) |- (12.center);
\draw  (13.center) |- (14.center);
\draw  (12.center) |- (14.center);
\draw  (s15.center) |- (02.center);
\draw  (s16.center) |- (02.center);
\draw  (02.center) |- (03.center);
\draw  (s12.center) |- (03.center);
\draw  (03.center) |- (06.center);
\draw  (s13.center) |- (06.center);
\draw  (06.center) |- (07.center);
\draw  (s11.center) |- (07.center);
\draw  (07.center) |- (08.center);
\draw  (s14.center) |- (08.center);
\draw  (11.center) |- (15.center);
\draw  (14.center) |- (15.center);
\draw  (15.center) |- (16.center);
\draw  (08.center) |- (16.center);

\end{tikzpicture}
\caption{Overview of the betacoronavirus results. The data consists of 17 species of betacoronavirus across 36,889 sites. \textsc{Vncsmc} is run using $K,M=(256,1)$. A single nonclock phylogeny is chosen based on maximum likelihood and displayed. Colors denote species from the four varying viral lineages: Embecovirus (orange \textit{lineage A}); Nobecovirus (blue \textit{lineage D}); Sarbecovirus (red \textit{lineage B} including \textsc{Sars-CoV} and \textsc{Sars-CoV-II}); Merbecovirus (grey \textit{lineage C}) and Hibecovirus (black \textit{not classified into the four lineages}) are each partitioned in clades.}
\label{fig:betacoronavirus}
\end{figure*}

\clearpage
\begin{figure}
\centering
\includegraphics[width=0.6\textwidth]{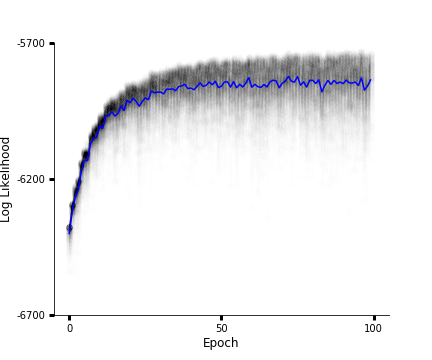}
\caption{\textsc{Vncsmc} on the primates data with $K, M$ = (128, 1). The full distribution of log likelihood values for all particles across epochs is plotted in black. The average likelihood across samples is plotted in blue. }
\label{fig:loglikallsamples}
\end{figure} 

\begin{table}
\centering

\begin{tabular} {ccccccccc}
\toprule
& &  \textsc{Vcsmc} & & & & \textsc{Vncsmc} &\\
$K$ & \textit{s/it} & \textit{s/mit} & \textit{time (minutes)} & \textsc{Ess} & \textit{s/it} & \textit{s/mit} & \textit{time (minutes)} & \textsc{Ess} \\ 
\midrule
4 &  5.17e-2 & 1.31e-2  & 0:22 & 3.98 & 4.01 & 1.17  & 6:32 & 3.99\\
8 & 5.58e-2 & 1.42e-2 & 0:28  & 7.96 &   4.27 & 1.24  & 7:09 & 7.88\\ 
16 &  3.11e-2 & 7.76e-2 & 0:30  & 15.79 &  4.83 & 1.53  & 8:15 & 15.62\\
32 &  5.78e-2 & 2.17e-1 & 0:49 & 31.72 & 5.98 & 1.59 & 10:17 & 31.00 \\
64 &  9.80e-2 & 2.66e-1  & 1:23 & 62.92 & 8.33 & 2.09 & 14:33 & 62.59 \\
128 &  1.35 & 3.48e-1 & 2:16 & 122.79 & 11.88 & 2.89 & 20:02 & 124.23 \\
256  & 2.25 & 5.95e-1 & 3:52 & 252.02 & 21.77 & 4.98 & 36:51 & 252.43 \\
\bottomrule
\end{tabular}
\caption{Empirical running times of \textsc{Vcsmc} and \textsc{Vncsmc}. The Primates data consists of 12 taxa over 898 sites admitting 13,749,310,575 distinct tree topologies. Experiments were performed on a 2.4GHz 8-core intel i9 processor Macbook Pro with 64 GB memory and no GPU utilization. We profile using $K = \{4,8,16,32,64,128,256\}$ and $M=1$. The left column provides seconds per iteration (\textit{s/it}), the left center column provides seconds per minibatch (\textit{s/mit}), the center right column provides total running time (minutes) across 100 epochs. The effective sample size is provided in the right columns.}
\label{table:runningtimes}
\end{table}

\begin{figure}[ht!]
\centering
\begin{tikzpicture}[sloped]
\node (A) at (-7.5,0) {$A$};
\node (B) at (-6.5,0) {$B$};
\node (C) at (-5.5,0) {$C$};
\node (D) at (-4.5,0) {$D$};
\node (ab) at (-7.,1) {};
\node (cd) at (-5.,1) {};
\node (abcd) at (-6.,2) {};

\draw  (A) |- (ab.center);
\draw  (B) |- (ab.center);
\draw  (C) |- (cd.center);
\draw  (D) |- (cd.center);
\draw  (cd.center) |- (abcd.center);
\draw  (ab.center) |- (abcd.center);

\node (e) at (-3.5,0) {$A$};
\node (f) at (-2.5,0) {$B$};
\node (g) at (-1.5,0) {$C$};
\node (h) at (-.5,0) {$D$};
\node (ef) at (-3.,1) {};
\node (efg) at (-2.,2) {};
\node (efgh) at (-2.,3) {};

\draw  (e) |- (ef.center);
\draw  (f) |- (ef.center);
\draw  (g) |- (efg.center);
\draw  (h) |- (efgh.center);
\draw  (ef.center) |- (efg.center);
\draw  (efg.center) |- (efgh.center);

\node (i) at (.5,0) {$A$};
\node (j) at (1.5,0) {$B$};
\node (k) at (2.5,0) {$D$};
\node (l) at (3.5,0) {$C$};
\node (ij) at (1.,1) {};
\node (ijk) at (2.,2) {};
\node (ijkl) at (3.,3) {};

\draw  (i) |- (ij.center);
\draw  (j) |- (ij.center);
\draw  (k) |- (ijk.center);
\draw  (l) |- (ijkl.center);
\draw  (ij.center) |- (ijk.center);
\draw  (ijk.center) |- (ijkl.center);

\end{tikzpicture}
\caption{Overview of the dual representation of a partial state. The partial state $s_{1}^1 = \{P_{AB},C,D\}$ for four taxa corresponding to Fig.~\ref{fig:nfe} is illustrated using its dual representation $\mathcal{D}(s)$. The dual state $\mathcal{D}(s)\subseteq \mathcal{T}$ corresponds to the three complete tree topologies.  (\text{left}): $\{ \{A,B\},\{C,D\}\}$ (\text{center}): $ \{\{A,B\}, \{A,B,C\}\}$ and (\text{right}): $ \{\{A,B\}, \{A,B,D\}\}$.}
\label{fig:dualphyhlo}
\end{figure}
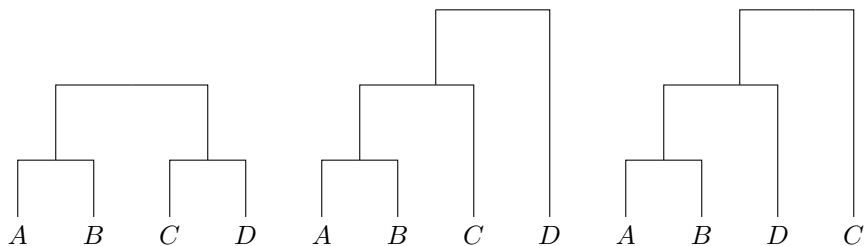

\begin{figure}
\centering
\includegraphics[width=0.6\textwidth]{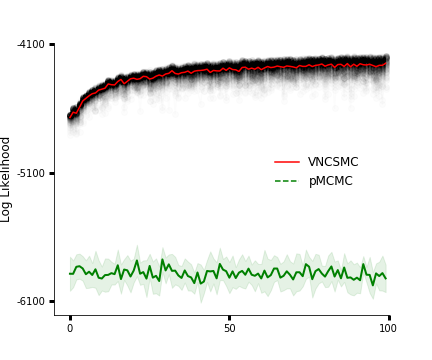}
\caption{\textsc{Vncsmc} on the 9-taxa subset of primates data with $K, M$ = (128, 1). The full distribution of log likelihood values for all \textsc{Vncsmc} particles across epochs is plotted in black. The average likelihood across samples is plotted in red. Particle Gibbs~\citep{wang2020particle} is run for 5000 iterations 10 times independently. The last 100 iterations for the 10 independent runs of Particle Gibbs are averaged and plotted in green. \textsc{Vncsmc} using 100 epochs outperforms Particle Gibbs using 5000 iterations.}
\label{fig:pMCMC}
\end{figure}

\end{document}